\documentclass{article}

\usepackage{microtype}
\usepackage{graphicx}
\usepackage{subcaption}
\usepackage{booktabs} 
\usepackage{dblfloatfix}
\usepackage{placeins}
\usepackage{needspace}
\usepackage{afterpage}

\usepackage{booktabs}
\usepackage{multirow}
\usepackage{adjustbox}
\usepackage{booktabs}
\usepackage{graphicx}
\usepackage[table]{xcolor}
\usepackage{tabularx}
\usepackage{ulem}
\usepackage{arydshln}

\newcolumntype{C}{>{\centering\arraybackslash}X}

\usepackage[hidelinks]{hyperref}
\usepackage[accepted]{icml2026}

\usepackage{amsmath}
\usepackage{amssymb}
\usepackage{mathtools}
\usepackage{amsthm}
\usepackage{capt-of}
\usepackage{placeins}

\usepackage{xcolor}

\usepackage[capitalize,noabbrev]{cleveref}

\theoremstyle{plain}
\newtheorem{theorem}{Theorem}[section]
\newtheorem{proposition}[theorem]{Proposition}

\theoremstyle{definition}

\newtheorem{assumption}[theorem]{Assumption}
\theoremstyle{remark}

\usepackage[textsize=tiny]{todonotes}

\icmltitlerunning{On the Salience of Low-Probability Tokens for AI-Generated Text Detection: A Multiscale Uncertainty Perspective}

\begin{document}

\twocolumn[
  \icmltitle{On the Salience of Low-Probability Tokens for AI-Generated Text Detection: \\A Multiscale Uncertainty Perspective}
  \icmlsetsymbol{equal}{*}

  \begin{icmlauthorlist}
    \icmlauthor{Yikai Guo}{a}
    \icmlauthor{Bin Wang}{a}
    \icmlauthor{Xilai Fan}{b}
    \icmlauthor{Wenjun Ke}{c}
    \icmlauthor{Haoran Luo}{d}
  \end{icmlauthorlist}

  \icmlaffiliation{a}{Beijing Institute of Computer Technology and Application, Beijing, China}
  \icmlaffiliation{b}{Academy of Mathematics and Systems Science, Chinese Academy of Sciences, Beijing, China}
  \icmlaffiliation{c}{Southeast University, Nanjing, China}
 \icmlaffiliation{d}{College of Computing and Data Science, Nanyang Technological University, Singapore}

  \icmlcorrespondingauthor{Xilai Fan}{fanxilai@lsec.cc.ac.cn}

  \icmlkeywords{Machine Learning, ICML}

  \vskip 0.3in
]



\printAffiliationsAndNotice{}  

\begin{abstract}

AI-generated text increasingly blends with human writing, raising practical risks such as misinformation, academic misuse, and corpora contamination. While statistical detectors are appealing for efficiency and generalization, they suffer from two key limitations. (i) Boilerplate dominance, boilerplate tokens shared across human and LLM writing can overwhelm discriminative signals. (ii) Brittle point estimates, relying on a single probability score yields unstable decisions under adversarial manipulations.
To address these issues, we propose \textbf{Uncertainty}, a multiscale uncertainty estimator that focuses on informative low-probability tokens, which more clearly expose distributional discrepancies. Locally, it alleviates boilerplate dominance by averaging the log-probabilities of low-probability tokens; globally, it reduces brittleness by capturing the distributional shape of this low-probability region via R{\'e}nyi entropy.
We further extend the detector to \textbf{Uncertainty++} via conditional independent sampling, yielding a more stable uncertainty estimation. Experiments across seven datasets and sixteen LLMs demonstrate high effectiveness, generalization, and robustness. Our code is available at \href{https://github.com/guoyikai2000/Uncertainty-AIGT}{github.com/guoyikai2000/Uncertainty-AIGT}.

\end{abstract}

\section{Introduction}
As LLMs continue to evolve, AI-generated text is increasingly difficult to distinguish from human writing~\cite{chen25online}. This proliferation creates tangible risks, including misinformation~\cite{pudasaini25benchmarking}, academic misuse~\cite{Delving25Kobak}, and contaminated web corpora~\cite{drayson25machine}. The scale, diversity, and manipulability of AI-generated text make it necessary to develop detectors that are efficient~\cite{POGER24Shi}, generalizable~\cite{zhou25kill}, and robust~\cite{macko-etal-2025-multisocial}.

Existing approaches to AI-generated text detection can be broadly grouped into three families: watermarking, fine-tuning, and statistical methods~\cite{masrour-etal-2025-damage}. Watermarking methods embed a detectable signature during generation that a verifier can later recognize~\cite{mao25watermarking}; however, tests on popular public LLMs did not detect statistically significant evidence of watermarking, which can limit the practical applicability of watermark-based detection~\cite{gloaguen25blackbox}. Fine-tuning methods train discriminative detectors on curated corpora and can perform well when test-time conditions match the training distribution, but constructing and maintaining such datasets is costly; moreover, these detectors often generalize poorly when the generator or domain changes~\cite{chen25divscore}. In contrast, statistical methods score a candidate text with one or more reference language models and summarize token-level statistics into features, enabling efficient inference and typically stronger generalization~\cite{xu25lastde}.

\begin{figure}[t]
    \centering
    \includegraphics[width=1.0\columnwidth]{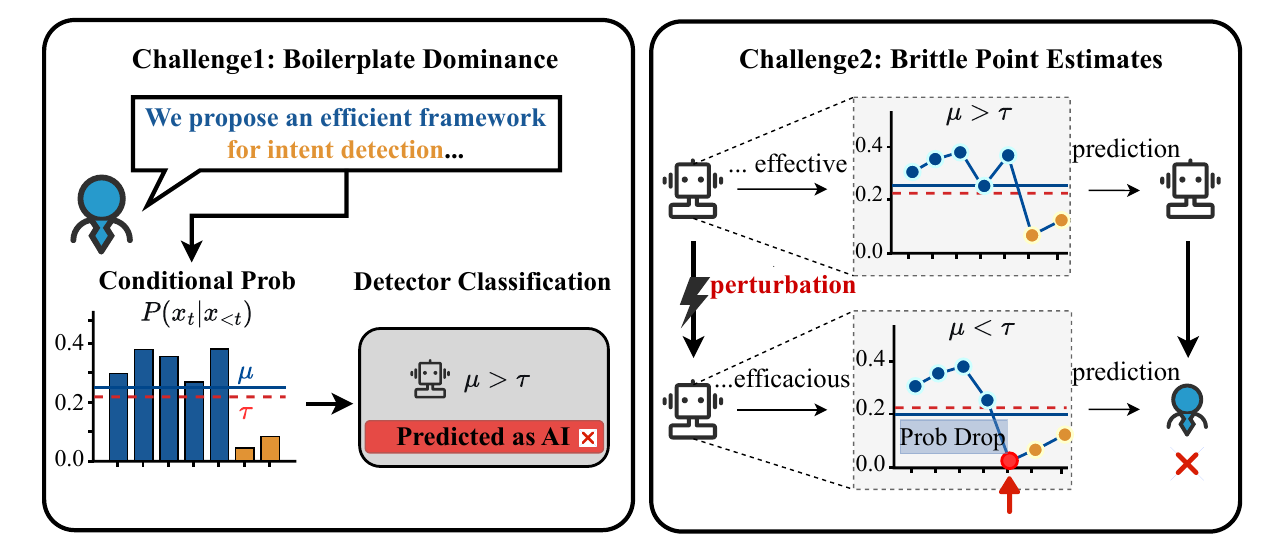}
    \caption{Two challenges faced by statistical methods which use proxy model to score. \textbf{Left: Boilerplate dominance.} Sequence-level aggregation of token-wise conditional probabilities can be dominated by high-probability boilerplate spans that are common to both human and LLM writing, leading to misclassification (e.g., a human-written sentence predicted as AI-generated when the average score $\mu$ exceeds a threshold $\tau$). \textbf{Right: Brittle point estimates.} Collapsing each conditional distribution to a single probability score discards distributional shape, making decisions sensitive to  surface-level perturbations (e.g., decoding or paraphrasing), which can substantially shift $\mu$ and flip the prediction.}
\label{fig:intro}
\end{figure}

Nevertheless, existing statistical methods still face two challenges: boilerplate dominance and brittle point estimates. First, aggregating likelihood signals over all tokens can be dominated by high-probability boilerplate shared by humans and LLMs, such as routine wording, common phrases, and generic transitions used to connect points or add context.  As illustrated in Fig.~\ref{fig:intro}, the human-written text \textit{We propose an efficient framework for intent detection} contains boilerplate spans \textit{propose an efficient framework} that also appear frequently in AI-generated text. Because a proxy model assigns similarly high likelihood to such ubiquitous spans across both classes, averaging over the full sequence can dilute the contribution of genuinely discriminative tokens.
Second, many existing approaches rely on observed token probability as a point estimate, collapsing the underlying conditional distribution into a single scalar and discarding information about its shape. As a result, these detectors can be sensitive to manipulations such as changes in decoding strategy or paraphrasing. As shown in Fig.~\ref{fig:intro}, even surface-level edits can substantially shift these point-estimate scores.

To address these  challenges, we propose a multiscale uncertainty estimator that captures both local and global properties of conditional distributions. To mitigate the dominance of high-probability boilerplate, we avoid treating all tokens equally. Instead, we rank token positions by the observed-token log-probability and compute statistics only on the bottom tail, thereby emphasizing informative low-probability regions. To improve robustness beyond point estimates, we further incorporate distributional information by computing R{\'e}nyi entropy of the conditional distribution at each position and averaging it over the same tail positions, which summarizes global uncertainty in distribution shape. The R{\'e}nyi order $\alpha$ offers a controllable bias toward head or tail mass in the vocabulary dimension, complementing the tail token selection. We then fuse these two signals into a unified detection score, yielding \textbf{Uncertainty}. Leveraging conditional independent sampling, we further extend our method to \textbf{Uncertainty++}, producing a more stable variant. Experiments across seven datasets and sixteen generators demonstrate high effectiveness, generalization, and robustness of our methods.

\paragraph{Contributions.}Our main contributions are: (a) identifying and empirically validating the \emph{low-probability hypothesis}, namely that low-probability tokens carry more discriminative evidence for AI-generated text detection than high-probability boilerplate; and (b) proposing a multiscale uncertainty estimator inspired by this hypothesis that fuses the tail mean log-probability with R{\'e}nyi entropy averaged over the same tail positions into a unified detection score.

\paragraph{Conflict of Interest Disclosure.} The authors declare no financial or other substantive conflicts of interest.

\section{Preliminaries}

\paragraph{Problem Formulation.}
We study zero-shot AI-generated text detection as a binary classification problem: given a candidate text $\mathbf{x}$, determine whether it is human-written or AI-generated, without access to labeled training data.

We consider both white-box and black-box settings.
In the white-box setting, the detector has access to a source model that generated the text.
In the black-box setting, the source model is inaccessible, so the detector performs inference through a proxy model rather than the true generator. This setting is motivated by the empirical observation that modern language models, despite architectural differences, exhibit shared statistical regularities that distinguish AI-generated text from human writing~\cite{xu25lastde}.

\paragraph{Conditional Probability Distribution.}
Let $\mathbf{x} = \{x_i\}_{i=0}^{n-1}$ denote a sequence of text tokens of length $n$, where each token $x_i$ belongs to a vocabulary $\mathcal{V}$ of size $|\mathcal{V}| = D$.
We use $x_{<i} = \{x_i\}_{i=0}^{i-1}$ to denote the prefix at position~$i$.
At each position $i$, this full distribution $p_\theta(\cdot \mid x_{<i})$ characterizes the model's uncertainty over the next token, which we refer to as the \emph{conditional probability distribution}.
The joint probability of a sequence factorizes as
$p_\theta(\mathbf{x}) = \prod_{i=1}^{n-1} p_\theta(x_i \mid x_{<i})$. In the white-box setting, the detector has access to a source model that generated the text. 

\paragraph{Low-Probability Tokens.}
We define \emph{low-probability tokens} relative to the distribution of conditional probabilities within each sequence.
Specifically, given a sequence $\mathbf{x}$ of length $n$ and a percentile level $\rho \in (0,1]$, let $\mathcal{S}_\rho \subseteq \{1, \ldots, n-1\}$ denote the index set of positions whose chosen conditional probabilities $p_\theta(x_i \mid x_{<i})$ fall in the bottom $\rho$ fraction among all positions in that sequence.
We call the tokens at positions in $\mathcal{S}_\rho$ the \emph{low-probability tokens} at level $\rho$.
Such tokens correspond to positions where the model finds the actual token choice relatively surprising compared to other positions in the same sequence.

\section{Method}

\begin{figure*}[t]
\centering
\includegraphics[width=0.88\textwidth, trim=3.5cm 2cm 3.2cm 2.4cm, clip]{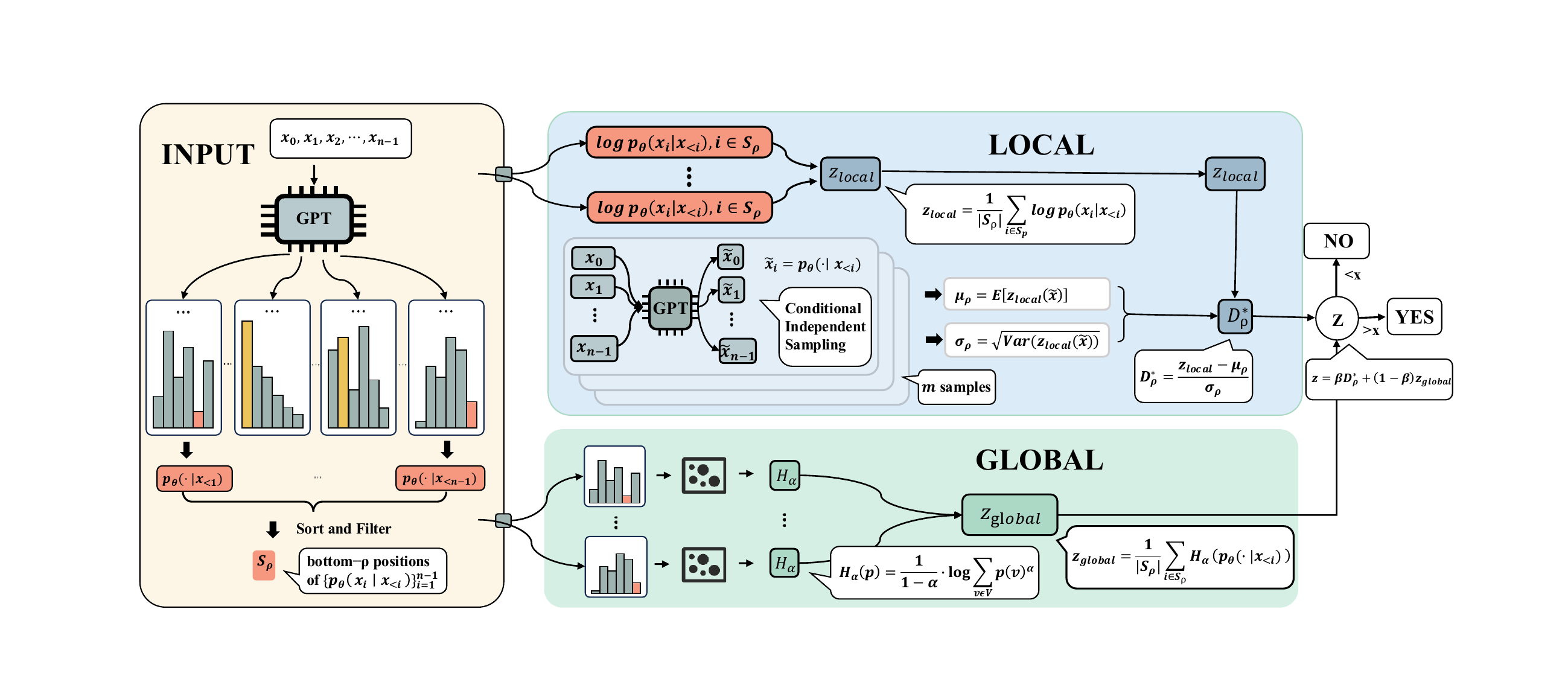}
\caption{Overview of the proposed multiscale uncertainty framework.
The method exploits the empirical observation that low-probability tokens carry stronger discriminative signals between AI-generated and human-written text.
We compute a percentile-based local uncertainty by aggregating log-probabilities over the bottom-$\rho$ tokens, and a global uncertainty using R\'enyi entropy to characterize the shape of the conditional distribution.
These two complementary signals are combined into a unified detection score.
The figure depicts \textsc{Uncertainty++}, which normalizes the local signal $z_{\text{local}}$ via conditional independent sampling before fusion with $z_{\text{global}}$; the simpler variant \textsc{Uncertainty} is obtained by directly aggregating $z_{\text{local}}$ and $z_{\text{global}}$ without the sampling-based normalization step.}
\label{fig:method}
\end{figure*}

An overview of our framework is shown in Figure~\ref{fig:method}.
To address boilerplate dominance and brittle point estimates, we propose a multiscale uncertainty framework that captures both local and global signals.
First, we hypothesize and empirically verify that low-probability tokens carry stronger discriminative signals, then develop a \emph{percentile-based local uncertainty} measure with justification for its effectiveness (Section~\ref{sec:local}).
Second, motivated by theoretical analysis showing that entropy-based measures offer greater robustness, we introduce an \emph{entropy-based global uncertainty} measure that captures distributional shape beyond point estimates (Section~\ref{sec:global}).
These complementary signals are integrated into a unified detection framework (Section~\ref{sec:unified}). 

\subsection{Percentile-Based Local Uncertainty}
\label{sec:local}

This section presents our percentile-based local uncertainty measure to mitigate boilerplate dominance.
We first formalize the core hypothesis, then provide empirical evidence, define the detection statistic and its normalized variant, and give justification for its effectiveness.

\paragraph{Core Hypothesis and Empirical Evidence.}
\begin{assumption}[Low-Probability Discriminability]
\label{assump:low_prob}
The discriminative signals between AI-generated and human-written text are substantially larger at low-probability tokens than that observed at high-probability tokens. 
\end{assumption}
Our method builds on Assumption~\ref{assump:low_prob}.
On \textit{XSum} with source model LLaMA3-8B, proxy model GPT-J, and percentile level $\rho = 0.15$, the Human--AI LogRank gap at low-probability positions is $\mathbf{1.59}$, versus $\mathbf{0.45}$ at high-probability positions ($3.5\times$). The probability ratio (AI/Human) is $\mathbf{9.69\times}$ at low-probability positions versus $\mathbf{1.38\times}$ at high-probability positions. Full statistics are reported in Appendix~\ref{sec:appendix_empirical}.

\paragraph{Local Uncertainty Statistic.}
Motivated by Assumption~\ref{assump:low_prob}, we adopt a percentile-based strategy that selectively aggregates low-probability tokens.
For a sequence of values $\{y_i\}_{i=1}^{n}$, we define the percentile aggregation operator $\mathcal{Q}_\rho$ at level $\rho \in (0,1]$ as the mean of the bottom $\rho$ fraction:
\begin{equation}
\mathcal{Q}_\rho(\{y_i\}) = \frac{1}{|\mathcal S_\rho|}\sum_{i \in \mathcal{S}_\rho} y_i,
\label{eq:def_Qrho}
\end{equation}
where $\mathcal{S}_\rho$ is the index set of the bottom $\rho$ fraction of the elements. 
Applying this operator to token-level log-probabilities yields the local uncertainty statistic:
\begin{equation}
\label{eq:z_local}
z_{\text{local}} = \mathcal{Q}_\rho\bigl( \{\log p_\theta(x_i \mid x_{<i})\}_{i=1}^{n-1} \bigr).
\end{equation}
The choice of $\rho$ reflects a tradeoff: smaller $\rho$ concentrates on the most discriminative tokens but increases variance, while larger $\rho$ reduces variance but reduces the signal.
We analyze this tradeoff empirically in Section~4 (Figure~\ref{fig:hyperparameter}).

\paragraph{Relative Normalization via Conditional Sampling.}
Although $z_{\text{local}}$ in Equation~\eqref{eq:z_local} is discriminative, its magnitude depends on text length and domain.
To obtain a more robust statistic, we normalize it by comparing the observed value with its expectation under the model’s own conditional distribution, extending Fast-DetectGPT~\cite{bao2024fastdetectgpt} to the setting of low-probability aggregation.

Let $\tilde{x}_i \sim p_\theta(\cdot \mid x_{<i})$ be independently sampled at each position, with each draw conditioned on the original prefix $x_{<i}$ rather than on previously sampled tokens.
We define the \emph{Percentile Discrepancy} at level $\rho$ as:
\begin{equation}
\label{eq:D_rho}
D_\rho = z_{\text{local}} - \mathbb{E}\,\mathcal{Q}_\rho\bigl( \{\log p_\theta(\tilde{x}_i \mid x_{<i})\}_{i=1}^{n-1} \bigr),
\end{equation}
and its normalized form:
\begin{equation}
D_\rho^* = \frac{D_\rho}{\sqrt{\mathrm{Var}\,\mathcal{Q}_\rho\bigl( \{\log p_\theta(\tilde{x}_i \mid x_{<i})\}_{i=1}^{n-1} \bigr)}}.
\label{eq:D_rho_norm}
\end{equation}
In practice, $D_\rho^*$ is estimated via Monte Carlo sampling with $m$ conditionally independent samples per position, from which we compute the empirical mean and variance.
When $\rho = 1$, we recover the full-sequence discrepancy.
Intuitively, AI-generated text has log-probability significantly higher than its distributional expectation, yielding large positive values of $D_\rho^*$;
human-written text, in contrast, has log-probability close to the expectation, resulting in $D_\rho^* \approx 0$.
This is verified empirically in Appendix~\ref{sec:appendix_jensen_gap}.

\paragraph{Jensen Discrepancy.}

While the intuition that AI-generated text yields positive discrepancy holds for any $\rho$, focusing on low-probability tokens introduces an additional effect related to the concavity of the percentile operator.

\begin{proposition}[Concavity of the Percentile Operator]
\label{prop:concavity}
The operator $\mathcal{Q}_\rho$ defined in Equation \eqref{eq:def_Qrho} is concave.
Hence, for random variables $\{Y_i\}_{i=1}^{n-1}$, Jensen's inequality yields
\begin{equation*}
\mathbb{E}\,\mathcal{Q}_\rho(\{Y_i\}_{i=1}^{n-1}) \leq \mathcal{Q}_\rho(\{\mathbb{E}\,Y_i\}_{i=1}^{n-1}). 
\end{equation*}
\end{proposition}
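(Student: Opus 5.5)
The plan is to write $\mathcal{Q}_\rho$ as a pointwise minimum of linear functionals. Let $N=n-1$ and $k=|\mathcal{S}_\rho|$. The bottom-$\rho$ fraction has a fixed cardinality $k$ (e.g.\ $k=\lceil \rho N\rceil$) that depends only on $N$ and $\rho$, not on the values $y$. Because of this, for $y\in\mathbb{R}^N$ we have
\begin{equation*}
\mathcal{Q}_\rho(y) \;=\; \min_{S\subseteq\{1,\dots,N\},\,|S|=k}\ \frac{1}{k}\sum_{i\in S} y_i .
\end{equation*}
The first step is to justify this identity. For any fixed $S$ of size $k$, the sum $\sum_{i\in S}y_i$ is at least the sum of the $k$ smallest entries, since the $j$-th smallest element of $\{y_i\}_{i\in S}$ dominates the $j$-th order statistic $y_{(j)}$ of $y$. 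Equality holds when $S=\mathcal{S}_\rho$. Ties pose no problem: any tie-breaking rule in the definition of $\mathcal{S}_\rho$ yields the same sum of the $k$ smallest values, so $\mathcal{Q}_\rho$ is well defined as a function of $y$.

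The second step is concavity. For each fixed $S$, the map $y\mapsto \frac1k\sum_{i\in S}y_i$ is linear. A pointwise minimum of finitely many linear (hence concave) functions is concave: for $\lambda\in[0,1]$,
\begin{equation*}
\mathcal{Q}_\rho(\lambda y+(1-\lambda)y')\ \ge\ \lambda\,\mathcal{Q}_\rho(y)+(1-\lambda)\,\mathcal{Q}_\rho(y').
\end{equation*}
This follows by taking the minimizing $S$ on the left and bounding each of its two resulting terms below by the corresponding minima. $\mathcal{Q}_\rho$ is also finite-valued and continuous, being a minimum of finitely many linear maps, so it is measurable.

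The third step is Jensen's inequality for the concave function $\mathcal{Q}_\rho$ applied to the random vector $Y=(Y_1,\dots,Y_N)$. This requires each $Y_i$ to be integrable, which holds here since $\log p_\theta(\tilde x_i\mid x_{<i})$ takes finitely many finite values. Then $|\mathcal{Q}_\rho(Y)|\le \max_i|Y_i|$ is integrable as well. To avoid invoking a general multivariate Jensen theorem, the inequality can be proved directly from the min representation: for every fixed $S$ we have $\mathcal{Q}_\rho(Y)\le \frac1k\sum_{i\in S}Y_i$ pointwise. Taking expectations gives $\mathbb{E}\,\mathcal{Q}_\rho(Y)\le \frac1k\sum_{i\in S}\mathbb{E}Y_i$. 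Minimizing over $S$ then yields $\mathbb{E}\,\mathcal{Q}_\rho(Y)\le \mathcal{Q}_\rho(\{\mathbb{E}Y_i\})$.

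The only real subtlety is the first step, specifically the requirement that $|\mathcal{S}_\rho|$ be independent of the data and the handling of ties. Once the min-over-subsets representation is in place, the rest is routine.
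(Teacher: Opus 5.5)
Your proposal is correct and follows the same route as the paper: write $\mathcal{Q}_\rho$ as a minimum, over index sets $S$ of fixed size, of the linear averages $\frac{1}{|S|}\sum_{i\in S} y_i$, conclude concavity as a pointwise minimum of linear functions, and then apply Jensen. Your additional details are sound and make explicit what the paper's two-line proof leaves implicit: the order-statistic justification of the min identity, the handling of ties and of a data-independent $|S|$, integrability, and the direct derivation of the expectation inequality (bound by each fixed $S$, then minimize).
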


\begin{proof}
See Appendix~\ref{sec:appendix_concavity_proof}.
\end{proof}

Based on Proposition~\ref{prop:concavity}, we define the \emph{Jensen discrepancy}
\begin{equation}
\widetilde{D}_\rho = z_{\text{local}} - \mathcal{Q}_\rho\bigl( \{\mathbb{E}\log p_\theta(\tilde{x}_i \mid x_{<i})\}_{i=1}^{n-1} \bigr)
\end{equation}
as a lower bound of $D_\rho$. 
Let $\widetilde{D}_\rho^*$ be its normalized form.
Next, we empirically examine the relationship among $D_\rho$, $\widetilde{D}_\rho$, and $D_1$ with their normalized counterparts.

\begin{proposition}[Empirical Asymmetric Jensen Amplification]
\label{prop:asymmetric_gap}
Across multiple datasets and source models (see Appendix~\ref{sec:appendix_jensen_gap}), we observe that for AI-generated text, $D_\rho^* > \widetilde{D}_\rho^*$, while $\widetilde{D}_\rho^*$ remains comparable to the full-sequence discrepancy $D_1^*$; while for human-written text, $D_\rho^* \approx \widetilde{D}_\rho^* \approx D_1^* \approx 0$.
\end{proposition}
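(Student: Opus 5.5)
The statement is an empirical observation rather than a theorem, so the plan has two parts. The first part isolates what can be proved deterministically, namely the inequality $D_\rho \ge \widetilde{D}_\rho$ and the vanishing of both discrepancies for human text in expectation. The second part specifies the measurements that verify the remaining comparisons.

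\emph{Step 1 (the ordering of $D_\rho$ and $\widetilde{D}_\rho$).} Condition on the observed prefix sequence, so that $z_{\text{local}}$ is fixed and the $\tilde{x}_i$ are independent draws with $Y_i=\log p_\theta(\tilde{x}_i\mid x_{<i})$. Proposition~\ref{prop:concavity} gives $\mathbb{E}\,\mathcal{Q}_\rho(\{Y_i\})\le \mathcal{Q}_\rho(\{\mathbb{E}Y_i\})$. Subtracting both sides from $z_{\text{local}}$ yields
\begin{equation*}
D_\rho - \widetilde{D}_\rho = \mathcal{Q}_\rho(\{\mathbb{E}Y_i\}) - \mathbb{E}\,\mathcal{Q}_\rho(\{Y_i\}) \ge 0 .
\end{equation*}
I would then bound this Jensen gap from below. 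Since $\mathcal{Q}_\rho$ is a minimum over averages of subsets of size $k=\lceil\rho(n-1)\rceil$, the gap is at least of order the expected downward deviation that the bottom-$k$ selection picks up. This deviation grows with the per-position entropy-related variance $\mathrm{Var}\,Y_i$ and vanishes when $\rho=1$, where $\mathcal{Q}_1$ is linear.

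\emph{Step 2 (comparing the normalized versions).} $D_\rho^*$ and $\widetilde{D}_\rho^*$ share the same normalizer $\sqrt{\mathrm{Var}\,\mathcal{Q}_\rho(\{Y_i\})}$, so $D_\rho^*\ge\widetilde{D}_\rho^*$ follows directly from Step 1. Strictness holds whenever the variance is nonzero at selected positions.

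\emph{Step 3 (the human case).} If $x_i$ is itself distributed like $\tilde{x}_i$, then $z_{\text{local}}$ has the same law as $\mathcal{Q}_\rho(\{Y_i\})$, so $\mathbb{E}D_\rho=0$. This is the idealized "human behaves like a sample" assumption. I would then argue that for human text the selected positions carry low-variance conditionals, so the Jensen gap is small, which gives $D_\rho^*\approx\widetilde{D}_\rho^*$. I expect this step to be the main obstacle, because the approximation cannot be derived; it must be checked empirically.

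\emph{Step 4 (empirical confirmation).} The claims that $\widetilde{D}_\rho^*$ is comparable to $D_1^*$ for AI text and that $D_\rho^*\approx\widetilde{D}_\rho^*\approx D_1^*\approx 0$ for human text depend on the data distribution. I would verify them with a Monte Carlo protocol, reported in Appendix~\ref{sec:appendix_jensen_gap}:
\begin{itemize}
\item For each dataset and source/proxy pair, draw $m$ conditional samples per position.
\item Estimate $\mathbb{E}\,\mathcal{Q}_\rho$, $\mathcal{Q}_\rho(\mathbb{E}Y_i)$ (using the exact per-position expectation $-H(p_\theta(\cdot\mid x_{<i}))$), and $\mathrm{Var}\,\mathcal{Q}_\rho$.
\item Report the means and standard deviations of the three normalized discrepancies separately for human and AI texts, with confidence intervals over texts, so that the asymmetry is shown to be statistically significant.
\end{itemize}
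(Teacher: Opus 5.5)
Your overall route is the paper's route. The proposition is supported only by the averages in Appendix~\ref{sec:appendix_jensen_gap} (AI: $D_\rho^*=2.18$, $\widetilde{D}_\rho^*=1.34$, $D_1^*=1.43$; human: $0.08$, $0.09$, $-0.31$). Your Step~1 is the paper's own remark that Proposition~\ref{prop:concavity} makes $\widetilde{D}_\rho$ a lower bound for $D_\rho$. The problems are in the steps you present as derived.

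\textbf{Step~2.} The paper never fixes the normalizer of $\widetilde{D}_\rho$, and its human row is incompatible with your choice. With a common normalizer and expectations computed exactly (or from the same draws), $D_\rho^*-\widetilde{D}_\rho^*\ge 0$ holds text by text. Any mean or median, and its rounding, then preserves the order, yet the table reports $0.08<0.09$. Your Step~4 builds in a related inconsistency: it pairs the exact values $\mathbb{E}Y_i=-H(p_\theta(\cdot\mid x_{<i}))$ inside $\mathcal{Q}_\rho$ with a Monte Carlo estimate of $\mathbb{E}\,\mathcal{Q}_\rho$. The estimated order can therefore flip by sampling noise exactly where the gap is small. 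So in the paper the AI inequality is a measured fact, not a corollary of concavity. Wherever your inequality does hold, it holds for both classes, so it carries none of the asymmetry the proposition asserts.

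\textbf{Strictness.} Your criterion is also off. If $S^\star$ is a bottom-$k$ set of $(\mathbb{E}Y_i)$, the gap is positive iff $\Pr[\max_{i\in S^\star}Y_i>\min_{j\notin S^\star}Y_j]>0$. Nonzero variance at the selected positions is neither necessary nor sufficient: supports separated across the selection boundary give zero gap despite nonzero variance. For full-support softmax conditionals the correct condition does hold generically.

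\textbf{Step~3.} This step fails as stated. Even for text sampled autoregressively from $p_\theta$, $z_{\text{local}}$ does not have the law of $\mathcal{Q}_\rho(\{Y_i\})$. Each $x_i$ enters all later prefixes, so the observed vector $(\log p_\theta(x_i\mid x_{<i}))_i$ is coupled differently from the conditionally independent resample. Only for $\rho=1$ does the tower property give $\mathbb{E}D_1=0$. A small counterexample: take two positions, a binary vocabulary, $p(x_1{=}a)=0.9$, $p(x_2\mid x_1{=}a)=(0.9,0.1)$, $p(x_2\mid x_1{=}b)=(0.5,0.5)$ and $k=1$; this already gives $\mathbb{E}D_\rho\approx 0.033$.

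More importantly, even granting $\mathbb{E}D_\rho=0$, you get $\mathbb{E}\widetilde{D}_\rho=-\mathbb{E}[\text{gap}]\le 0$. The human claim $\widetilde{D}_\rho^*\approx 0$ is then exactly the assertion that the Jensen gap vanishes on human prefixes, while the table puts it at about $0.56$ (unnormalized) on AI prefixes. That gap is a functional of the conditionals $p_\theta(\cdot\mid x_{<i})$ at all positions, because the random bottom-$k$ set of $(Y_i)$ can draw from any position. It does not depend on which observed tokens land in $\mathcal{S}_\rho$, so your heuristic about low-variance selected positions does not address it.

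Nothing in Steps~1--3 supports the asymmetric content of the proposition. As in the paper, it rests entirely on the measurement in Step~4, where your per-text confidence intervals would be a real improvement over the paper's bare averages.
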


Empirical results summarized in Proposition \ref{prop:asymmetric_gap} shows that the Jensen amplification effect is pronounced only for AI-generated text, reflecting the larger dispersion of token-level log-probabilities at low-probability positions, whereas such effects are negligible for human-written text.
Across 3 datasets, 12 source models, and $5{,}400$ paired samples, for AI-generated text the normalized discrepancy $D_\rho^* = \mathbf{2.18}$ exceeds both the Jensen lower bound $\widetilde{D}_\rho^* = \mathbf{1.34}$ and the full-sequence baseline $D_1^* = \mathbf{1.43}$. For human-written text, all three remain near zero ($D_\rho^* \approx \widetilde{D}_\rho^* \approx D_1^* \approx 0$).
Together, these results explain why that percentile selection combined with Jensen amplification yields better separation.

\subsection{Entropy-Based Global Uncertainty}
\label{sec:global}

This section introduces our global uncertainty measure based on R\'enyi entropy.
While local uncertainty captures how surprising the observed token choice is, global uncertainty characterizes the shape of the entire conditional distribution, thereby providing complementary information that is more robust to surface-level perturbations.

\paragraph{Robustness of Entropy-Based Measures.}
Point estimates such as log-probability are highly sensitive to small perturbations in the input text, which can lead to large fluctuations in detection scores. 
In contrast, entropy-based measures characterize the overall conditional probability distribution rather than relying on a single realized token, and therefore is expected to be more robust to such variations.
To formalize this intuition, we introduce a \emph{multiplicative perturbation model} for AI-generated text in which the conditional probability of a token $x_i$ is reduced by a factor $\gamma > 1$:
\begin{equation}
\label{eq:mult_perturbation_model}
p_\theta(x_i \mid x_{<i}) \rightarrow \frac{1}{\gamma} p_\theta(x_i \mid x_{<i}).
\end{equation}
Such perturbations arise from paraphrasing or changes in decoding strategies, where a high-probability token is replaced by a semantically similar but lower-probability alternative.

\begin{proposition}[Robustness of R\'enyi Entropy at Low-Probability Positions]
\label{prop:entropy_robust}
Consider the multiplicative perturbation model~\eqref{eq:mult_perturbation_model} with proportional redistribution of the remaining probability mass, and let the R\'enyi power sum be $S_\alpha = \sum_{v \in \mathcal{V}} p(v)^\alpha$. For any R\'enyi order $\alpha > 0$ ($\alpha \neq 1$) and any token $x_i$ satisfying $p_\theta(x_i \mid x_{<i}) \leq \tau$ with $\tau \leq (S_\alpha/2)^{1/\alpha}$, the log-probability changes by exactly $\log\gamma$, whereas the change of R\'enyi entropy satisfies
\begin{equation}
\label{eq:renyi_bound}
\begin{aligned}
& |H_\alpha(p') - H_\alpha(p)| \\
& \leq \frac{1}{|1-\alpha|} \max\Bigl( \frac{\alpha\,\tau\,(1-\gamma^{-1})}{1-\tau}, \frac{2\,\tau^{\alpha}(1-\gamma^{-\alpha})}{S_\alpha}\Bigr).
\end{aligned}
\end{equation}
\end{proposition}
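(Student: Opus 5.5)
Write $p=p_\theta(\cdot\mid x_{<i})$, $q=p(x_i)\le\tau$, and let $p'$ be the perturbed distribution: $p'(x_i)=q/\gamma$, and every other token $v\neq x_i$ is rescaled proportionally, $p'(v)=c\,p(v)$ with $c=\frac{1-q/\gamma}{1-q}\ge 1$. The log-probability claim is immediate, since $\log p'(x_i)-\log p(x_i)=-\log\gamma$. For the entropy we use $H_\alpha(p)=\frac{1}{1-\alpha}\log S_\alpha(p)$. Hence
\[
|H_\alpha(p')-H_\alpha(p)|=\frac{1}{|1-\alpha|}\,\bigl|\log S_\alpha(p')-\log S_\alpha(p)\bigr|,
\]
and the whole task is to bound the log-ratio of power sums.

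Split the power sum as $S_\alpha(p)=q^\alpha+R$ with $R=\sum_{v\neq x_i}p(v)^\alpha$. Then
\[
S_\alpha(p')=q^\alpha\gamma^{-\alpha}+c^\alpha R .
\]
The two parts move in opposite directions. The term at $x_i$ decreases by $q^\alpha(1-\gamma^{-\alpha})$, while the remainder increases by the factor $c^\alpha\ge 1$. So the log-ratio is sandwiched between two one-sided effects, and I will bound each separately, using $|\log A-\log B|\le\max(\text{upward change},\text{downward change})$ in log terms. This is where the $\max$ in \eqref{eq:renyi_bound} comes from.

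\emph{Upward effect (redistribution).} Dropping the negative term gives $\log S_\alpha(p')-\log S_\alpha(p)\le \log(c^\alpha R/S)\le \alpha\log c$. Using $\log c=\log\bigl(1+\frac{q(1-\gamma^{-1})}{1-q}\bigr)\le \frac{q(1-\gamma^{-1})}{1-q}$, and the fact that $q\mapsto q/(1-q)$ is increasing, with $q\le\tau$ we get the first term $\frac{\alpha\tau(1-\gamma^{-1})}{1-\tau}$.

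\emph{Downward effect (the term at $x_i$).} Since $c\ge1$, we have $S_\alpha(p')\ge S-q^\alpha(1-\gamma^{-\alpha})$. Therefore
\[
\log S_\alpha(p)-\log S_\alpha(p')\le -\log(1-u),\qquad u=\frac{q^\alpha(1-\gamma^{-\alpha})}{S_\alpha}.
\]
The hypothesis $\tau\le(S_\alpha/2)^{1/\alpha}$ gives $q^\alpha\le S_\alpha/2$, so $u\le 1/2$. On $[0,1/2]$ we have $-\log(1-u)\le 2u$ (by convexity, checking the endpoint $\log 2\le 1$), which yields $\frac{2\tau^\alpha(1-\gamma^{-\alpha})}{S_\alpha}$.

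The main obstacle is the sign handling for $\alpha<1$ versus $\alpha>1$ (for $\alpha<1$ we have $c^\alpha\le c$, so the bound $\alpha\log c$ still holds), together with whether $S_\alpha$ in the statement means the unperturbed sum. I take it to be the unperturbed sum, which is what the argument above uses. Combining the two one-sided bounds then gives \eqref{eq:renyi_bound}.
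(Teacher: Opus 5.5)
Your proof is correct and follows the paper's argument step for step: the same decomposition $S_\alpha(p')=(q/\gamma)^\alpha+c^\alpha R$, the upper bound $S_\alpha(p')\le c^\alpha S_\alpha$ combined with $\log(1+x)\le x$, and the lower bound via $c^\alpha\ge 1$ and $-\log(1-u)\le 2u$ on $[0,1/2]$. The paper obtains $S_\alpha(p')\le c^\alpha S_\alpha$ from $c\ge\gamma^{-1}$ and you from $\gamma^{-\alpha}\le 1\le c^\alpha$; the difference is immaterial.

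One slip to fix: the intermediate bound $\log S_\alpha(p')-\log S_\alpha(p)\le\log\bigl(c^\alpha R/S_\alpha\bigr)$ is false, because $S_\alpha(p')=(q/\gamma)^\alpha+c^\alpha R>c^\alpha R$. Dropping the negative change actually gives $S_\alpha(p')\le q^\alpha+c^\alpha R\le c^\alpha S_\alpha$, which is what you need. Your concern about $\alpha<1$ versus $\alpha>1$ is moot: $\log c^\alpha=\alpha\log c$ for every $\alpha>0$, and the sign of $1-\alpha$ is absorbed by the factor $1/|1-\alpha|$.
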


\begin{proof}
The proof is provided in Appendix~\ref{sec:entropy_proof}.
\end{proof}

First, the regularity condition $\tau \leq (S_\alpha/2)^{1/\alpha}$ in Proposition~\ref{prop:entropy_robust} is satisfied by low-probability tokens empirically, since their observed probability is typically several orders of magnitude smaller than the threshold for all $\alpha$ used in experiments.
Proposition~\ref{prop:entropy_robust} establishes a sharp contrast between log-probability and R\'enyi entropy under multiplicative perturbations.
Specifically, the change in R\'enyi entropy is small and bounded by $O(\tau^{\min(\alpha,1)})$.
Furthermore, as the perturbation strength $\gamma \rightarrow \infty$, the change in log-probability grows monotonically to infinity, whereas the change in R\'enyi entropy remains bounded.
Consequently, R\'enyi entropy provides a stable uncertainty signal at low-probability positions.

Motivated by this robustness property at the low-probability tokens, R\'enyi entropy serves as a perturbation-resistant statistic, which motivates its integration into our framework.

\paragraph{Global Uncertainty Statistic.}
Based on the above analysis, we characterize global uncertainty using R\'enyi entropy.
The R\'enyi entropy of order $\alpha > 0$ for a discrete distribution $p$ over vocabulary $\mathcal{V}$ is defined as:
\begin{equation*}
H_\alpha(p) = \frac{1}{1-\alpha} \log \sum_{v \in \mathcal{V}} p(v)^\alpha.
\end{equation*}
R\'enyi entropy generalizes Shannon entropy (recovered as $\alpha \to 1$) and provides tunable sensitivity to different regions of the probability mass, where $\alpha < 1$ emphasizes the tail of the distribution and $\alpha > 1$ emphasizes the common tokens.
The optimal $\alpha$ is determined empirically  in Section~4 (Figure~\ref{fig:hyperparameter}).
We define the global uncertainty statistic as the average R\'enyi entropy over low-probability tokens $\mathcal{S}_\rho$:
\begin{equation}
\label{eq:z_global}
z_{\text{global}} = \frac{1}{|\mathcal{S}_\rho|} \sum_{i \in \mathcal{S}_\rho} H_\alpha\bigl(p_\theta(\cdot \mid x_{<i})\bigr).
\end{equation}

\subsection{Unified Framework}
\label{sec:unified}

We integrate local and global uncertainty into a unified detection framework.

\paragraph{Uncertainty.}
We define an uncertainty score by combining local and global uncertainty statistics without sampling:
\begin{equation*}
z_{\textsc{Uncertainty}} = \beta \cdot z_{\text{local}} + (1 - \beta) \cdot z_{\text{global}},
\end{equation*}
where $z_{\text{local}}$ and $z_{\text{global}}$ are defined in Equations~\eqref{eq:z_local} and~\eqref{eq:z_global}, respectively, and $\beta \in [0,1]$ controls their relative contributions.
The sensitivity to the weighting parameter $\beta$ is analyzed in Section~\ref{sec:exp} (Figure~\ref{fig:hyperparameter}).

\paragraph{Uncertainty++.}
We further define a normalized uncertainty score to further improve stability. 
\begin{equation*}
z_{\textsc{Uncertainty++}} = \beta \cdot D_\rho^* + (1 - \beta) \cdot z_{\text{global}},
\end{equation*}
where $D_\rho^*$ is defined in Equation~\eqref{eq:D_rho_norm}.
This normalization calibrates local uncertainty against the model’s distributional expectations, thereby improving robustness to perturbations.


\section{Experiments}
\label{sec:exp}

\begin{table*}[t]
\centering
\small
\setlength{\tabcolsep}{2pt}
\renewcommand{\arraystretch}{0.95}
\captionof{table}{Black-box detection performance (AUROC) across 12 source models. Following Lastde, the AUROC for each model is averaged over three datasets: \textit{XSum}, \textit{WritingPrompts}, and \textit{Reddit}. The \textit{(Diff)} rows report the absolute improvement of our method over the strongest baseline within the same group. Bold and underlined numbers denote the best and second-best results within each group, respectively.}
\label{tab:main_black}
\resizebox{0.959\textwidth}{!}{%
\begin{tabular}{@{}lccccccccccccc@{}}
    \toprule
    \textbf{Method/Models} & \textbf{GPT-2} & \textbf{Neo-2.7} & \textbf{OPT-2.7} & \textbf{Llama-13} & \textbf{Llama2-13} & \textbf{Llama3-8} & \textbf{OPT-13} & \textbf{Bloom-7.1} & \textbf{Falcon-7} & \textbf{Gemma-7} & \textbf{Phi2-2.7} & \textbf{GPT-4-T} & \textbf{Avg.} \\ \midrule
    
    \rowcolor{gray!20} \multicolumn{14}{c}{\textbf{Probability-based Methods}} \\ \midrule
    Likelihood & 66.02 & 67.09 & 67.40 & 65.74 & 68.61 & 99.57 & 68.81 & 61.81 & 67.42 & 69.90 & 73.93 & \uline{79.70} & 71.33 \\
    LogRank & 70.21 & 71.20 & 72.34 & 70.27 & 72.67 & \textbf{99.68} & 73.02 & 67.47 & 71.69 & 72.21 & 78.02 & 79.24 & 74.83 \\
    DetectLRR & 78.22 & 79.21 & 81.21 & 78.44 & 78.80 & 96.62 & 80.23 & 79.49 & 79.89 & 73.58 & 83.82 & 73.86 & 80.28 \\
    Lastde & \uline{89.78} & \uline{90.10} & \uline{90.04} & \uline{80.43} & \uline{80.22} & \uline{99.60} & \uline{89.70} & \textbf{89.27} & \uline{84.59} & \uline{79.55} & \uline{88.57} & 75.98 & \uline{86.49} \\
    \textbf{Uncertainty} & \textbf{92.56} & \textbf{93.57} & \textbf{91.89} & \textbf{82.06} & \textbf{83.76} & 98.43 & \textbf{90.54} & \uline{88.92} & \textbf{86.62} & \textbf{83.37} & \textbf{91.34} & \textbf{81.78} & \textbf{88.74} \\
    \textit{(Diff)} & \textit{2.78} & \textit{3.47} & \textit{1.85} & \textit{1.63} & \textit{3.54} & \textit{-1.25} & \textit{0.84} & \textit{-0.35} & \textit{2.03} & \textit{3.82} & \textit{2.77} & \textit{2.08} & \textit{2.25} \\ \midrule

    \rowcolor{gray!20} \multicolumn{14}{c}{\textbf{Sampling-based Methods}} \\ \midrule
    DetectGPT & 67.56 & 69.28 & 72.03 & 66.12 & 67.96 & 82.90 & 73.89 & 61.83 & 68.69 & 66.55 & 72.76 & 81.73 & 70.94 \\
    DetectNPR & 68.07 & 68.41 & 73.06 & 67.83 & 70.60 & 96.75 & 75.13 & 63.00 & 70.42 & 65.72 & 74.08 & 79.94 & 72.75 \\
    DNA-GPT & 64.15 & 62.63 & 63.64 & 60.77 & 66.71 & \textbf{99.47} & 65.75 & 62.01 & 65.08 & 62.59 & 72.02 & 70.75 & 67.97 \\
    Fast-DetectGPT & 89.51 & 88.76 & 86.52 & 77.59 & 77.60 & \uline{99.31} & 86.14 & 84.55 & 81.41 & 81.46 & 86.68 & \uline{88.19} & 85.64 \\
    Lastde++ & \uline{94.70} & \uline{95.47} & \uline{93.97} & \uline{85.26} & \uline{85.65} & 98.93 & \uline{93.49} & \uline{92.23} & \uline{89.60} & \uline{87.59} & \uline{92.88} & \textbf{88.28} & \uline{91.51} \\
    \textbf{Uncertainty++} & \textbf{96.84} & \textbf{97.40} & \textbf{96.63} & \textbf{89.32} & \textbf{89.04} & 97.68 & \textbf{95.41} & \textbf{94.56} & \textbf{92.82} & \textbf{89.26} & \textbf{95.01} & 84.93 & \textbf{93.24} \\
    \textit{(Diff)} & \textit{2.14} & \textit{1.93} & \textit{2.66} & \textit{4.06} & \textit{3.39} & \textit{-1.79} & \textit{1.92} & \textit{2.33} & \textit{3.22} & \textit{1.67} & \textit{2.13} & \textit{-3.35} & \textit{1.73} \\ \bottomrule

  \end{tabular}
}
\end{table*}

\begin{table*}[t]
\centering
\small
\setlength{\tabcolsep}{2pt}
\renewcommand{\arraystretch}{0.95}
\captionof{table}{White-box detection performance (AUROC) across 12 source models. Following Lastde, the AUROC for each model is averaged over three datasets: \textit{XSum}, \textit{SQuAD}, and \textit{WritingPrompts}. The meanings of \textit{(Diff)}, boldface, and underlining are the same as in Table~\ref{tab:main_black}.}
\label{tab:main_white}
\resizebox{0.959\textwidth}{!}{%
    \begin{tabular}{@{}lccccccccccccc@{}}
    \toprule
    \textbf{Method/Models} & \textbf{GPT-2} & \textbf{Neo-2.7} & \textbf{OPT-2.7} & \textbf{GPT-J} & \textbf{Llama-13} & \textbf{Llama2-13} & \textbf{Llama3-8} & \textbf{OPT-13} & \textbf{Bloom-7.1} & \textbf{Falcon-7} & \textbf{Gemma-7} & \textbf{Phi2-2.7} & \textbf{Avg.} \\ \midrule
    
    \rowcolor{gray!20} \multicolumn{14}{c}{\textbf{Probability-based Methods}} \\ \midrule
    Likelihood & 91.66 & 89.40 & 88.08 & 84.96 & 63.66 & 65.36 & 98.34 & 84.45 & 88.01 & 76.77 & 70.09 & 89.67 & 82.54 \\
    LogRank & 94.15 & 92.87 & 90.99 & 88.69 & 68.87 & 70.29 & \uline{99.02} & 87.77 & 92.41 & 81.28 & 74.83 & 92.12 & 86.11 \\
    DetectLRR & 96.16 & 96.07 & 93.13 & 92.25 & 81.45 & 80.98 & 98.39 & 91.00 & 96.33 & 87.39 & 81.34 & 94.11 & 90.72 \\
    Lastde & \uline{98.43} & \uline{98.62} & \uline{98.15} & \uline{97.10} & \uline{88.50} & \uline{88.62} & \textbf{99.68} & \uline{96.63} & \textbf{99.56} & \uline{95.20} & \uline{91.53} & \uline{96.96} & \uline{95.75} \\
    \textbf{Uncertainty} & \textbf{98.99} & \textbf{98.96} & \textbf{98.69} & \textbf{98.70} & \textbf{91.73} & \textbf{92.55} & 98.69 & \textbf{97.95} & \uline{99.15} & \textbf{97.58} & \textbf{96.03} & \textbf{97.95} & \textbf{97.25} \\
    \textit{(Diff)} & \textit{0.56} & \textit{0.34} & \textit{0.54} & \textit{1.60} & \textit{3.23} & \textit{3.93} & \textit{-0.99} & \textit{1.32} & \textit{-0.40} & \textit{2.37} & \textit{4.50} & \textit{0.99} & \textit{1.50} \\ \midrule

    \rowcolor{gray!20} \multicolumn{14}{c}{\textbf{Sampling-based Methods}} \\ \midrule
    DetectGPT & 93.43 & 90.40 & 90.36 & 83.82 & 63.78 & 65.39 & 70.13 & 85.05 & 89.28 & 77.98 & 68.96 & 89.55 & 80.68 \\
    DetectNPR & 95.77 & 94.77 & 93.24 & 88.86 & 68.60 & 69.83 & 95.55 & 89.78 & 94.95 & 83.06 & 74.74 & 93.06 & 86.85 \\
    DNA-GPT & 89.92 & 86.80 & 86.79 & 82.21 & 62.28 & 64.46 & 98.07 & 82.51 & 86.74 & 74.04 & 63.63 & 88.00 & 80.46 \\
    Fast-DetectGPT & 99.54 & 99.49 & \uline{98.78} & \uline{98.95} & 93.45 & 93.38 & \textbf{99.90} & 98.21 & \uline{99.59} & 97.93 & 96.88 & \uline{98.10} & 97.85 \\
    Lastde++ & \textbf{99.72} & \textbf{99.88} & 99.46 & 99.51 & \uline{96.70} & \uline{96.63} & \uline{98.82} & \uline{99.12} & \textbf{99.94} & \uline{99.18} & \uline{98.38} & \textbf{98.75} & \uline{98.93} \\
    \textbf{Uncertainty++} & \uline{99.69} & \uline{99.79} & \textbf{99.67} & \textbf{99.74} & \textbf{97.41} & \textbf{98.01} & 99.50 & \textbf{99.33} & \uline{99.75} & \textbf{99.71} & \textbf{98.91} & 97.26 & \textbf{99.07} \\
    \textit{(Diff)} & \textit{-0.03} & \textit{-0.09} & \textit{0.21} & \textit{0.23} & \textit{0.71} & \textit{1.38} & \textit{-0.40} & \textit{0.21} & \textit{-0.19} & \textit{0.53} & \textit{0.53} & \textit{-1.49} & \textit{0.14} \\ \bottomrule
    \end{tabular}
}
\end{table*}

This section is organized around the experimental setup and following research questions:
\textbf{RQ1}: Do Uncertainty and Uncertainty++ outperform other baselines?
\textbf{RQ2}: How does each component contribute to detection performance?
\textbf{RQ3}: Do Uncertainty and Uncertainty++ generalize well across different domains and recent model families?
\textbf{RQ4}: Are Uncertainty and Uncertainty++ robust to different decoding strategies and paraphrasing attacks?
\textbf{RQ5}: Is our proposed low-probability proposition applicable to other methods?

\subsection{Experimental Setup}

\textbf{Datasets. }We evaluate our methods on seven datasets spanning a wide range of domains. Following the experimental protocol in Lastde~\cite{xu25lastde}, we conduct main experiments on four benchmarks: \textit{XSum}~\cite{narayan18xsum} \textit{SQuAD}~\cite{rajpurkar16squad}, \textit{WritingPrompts}~\cite{fan18writing}, and \textit{Reddit}~\cite{baumgartner20reddit}. To further investigate the generalization, we additionally perform cross-domain evaluations on \textit{FiQA}~\cite{maia18fiqa}, \textit{Wiki-csai}~\cite{guo23hc3}, and \textit{arXiv}~\cite{wang24m4gt}. Each dataset contains 150 human-written examples. For each human-written example, we take the first 30 tokens as the prompt to generate an LLM continuation. Detailed descriptions of datasets and prompt are provided in Appendix~\ref{sec:appendix_dataset}.

\textbf{Source models \& proxy model. }Experiments are evaluated on sixteen source models. For consistent comparisons, we include the thirteen LLMs used in Lastde~\cite{xu25lastde}. To further assess on recent LLMs, we additionally conduct experiments on GPT-5~\cite{openai2025gpt5}, Gemini-3-Flash~\cite{googlecloud2026gemini3}, and GLM-4.5-Flash~\cite{zhipuai2026glm45flash}. All methods use GPT-J~\cite{gpt-j-6b} as the proxy model under the black-box setting. Detailed information of models are provided in Appendix~\ref{sec:appendix_model}.

\textbf{Baselines \& metric. }We select nine representative statistical baselines and categorize them into two groups: \emph{Probability-based} and \emph{Sampling-based}. The key difference is that probability-based methods compute likelihood-related statistics directly from a given sample, whereas sampling-based methods rely on sampling to obtain relative scores across samples. Specifically, the probability-based baselines include Log-Likelihood~\cite{gehrmann19gltr}, LogRank~\cite{gehrmann19gltr}, DetectLRR~\cite{su23detectllm}, and Lastde~\cite{xu25lastde}. The sampling-based baselines include DetectGPT~\cite{mitchell23detectgpt}, DetectNPR~\cite{su23detectllm}, DNA-GPT~\cite{yang24dnagpt}, Fast-DetectGPT~\cite{bao2024fastdetectgpt}, and Lastde++~\cite{xu25lastde}. Detailed descriptions are provided in Appendix~\ref{sec:related_work}. Following Fast-DetectGPT and Lastde, we use the widely adopted AUROC as the primary metric. We also report partial results using TPR@1\%FPR and TPR@5\%FPR, which highlight performance under low false-positive rates.

\subsection{Main Results (RQ1)}

We propose Uncertainty as a probability-based method and compare it against probability-based baselines. We further introduce Uncertainty++ as an enhanced variant and compare it with sampling-based baselines. To facilitate fair comparison, we directly adopt the dataset used in Lastde and follow its black-box and white-box evaluation protocols.

\textbf{Overall results. }Results for the black-box and white-box settings are reported in Table~\ref{tab:main_black} and Table~\ref{tab:main_white}, respectively. In general, Uncertainty and Uncertainty++ achieve the best average AUROC in both settings. In the black-box setting, Uncertainty reaches 88.74\% AUROC, outperforming the strongest probability-based baseline, Lastde, by 2.25 points. In the white-box setting, Uncertainty++ achieves the highest average AUROC of 99.07\%. Notably, our methods deliver consistently competitive performance on both open-source and closed-source models. Additional results are provided in Appendix~\ref{sec:black_main} and Appendix~\ref{sec:white_main}. We also report white-box results under TPR@1\%FPR and TPR@5\%FPR in Appendix~\ref{sec:fpr} to show low false-positive risk performance.

\begin{figure}[htbp]
  \centering
  \includegraphics[width=\linewidth]{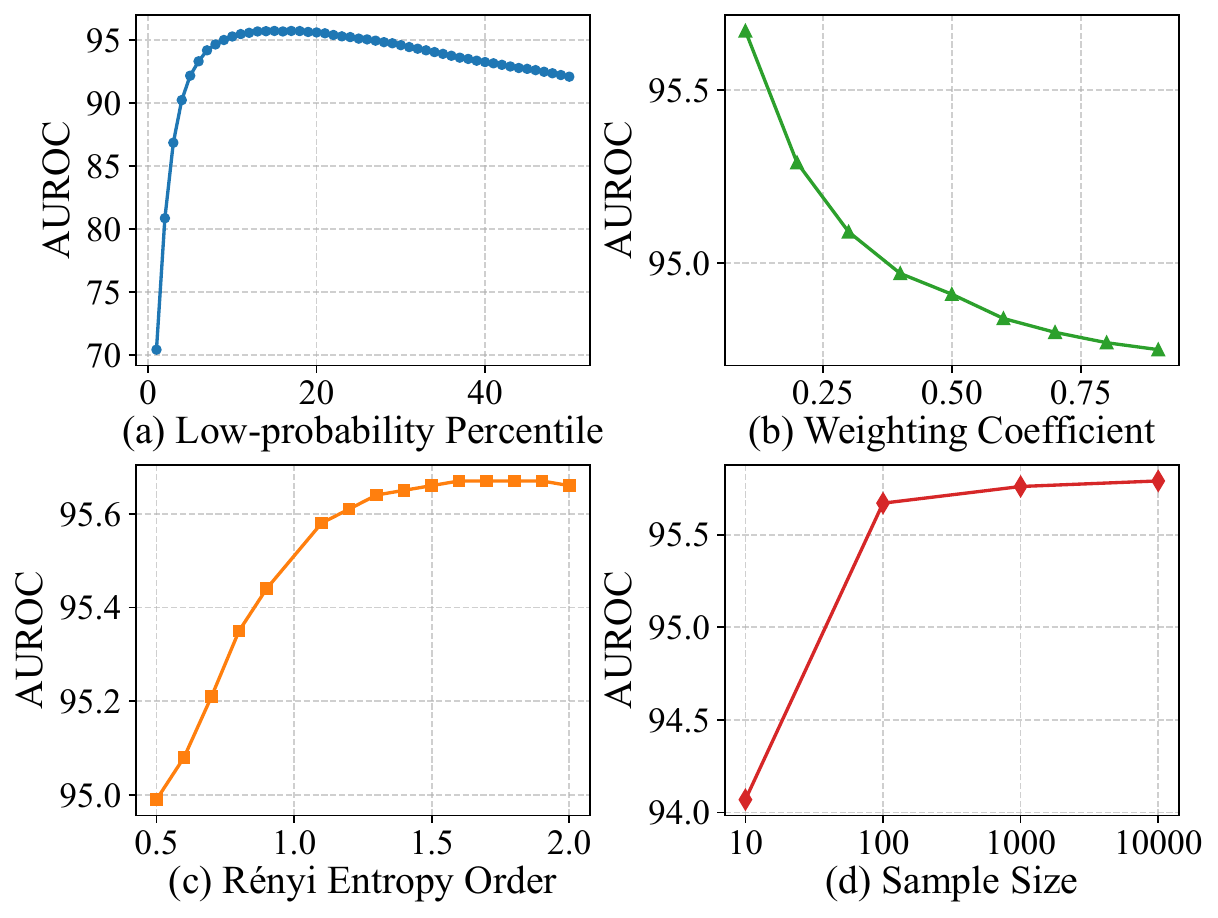}
  \caption{Hyperparameter sensitivity of Uncertainty++ under black-box setting on the \textit{Reddit} dataset. We report AUROC as a function of (a) low-probability percentile $\rho$, (b) weighting coefficient $\beta$, (c) R{\'e}nyi entropy order $\alpha$, and (d) sample size $m$, while keeping other hyperparameters fixed to default values; the boundary cases $\beta=0$ and $\beta=1$ are reported in the ablation study.}
  \label{fig:hyperparameter}
\end{figure}

\textbf{Hyperparameter analysis. }We analyze the hyperparameter sensitivity of Uncertainty++ under the black-box setting from four aspects: (a) low-probability percentile, (b) weighting coefficient, (c) R{\'e}nyi entropy order, and (d) sample size. As shown in Figure~\ref{fig:hyperparameter}, varying the low-probability percentile exhibits a clear unimodal pattern, performance increases initially and then declines as the percentile grows. When the percentile is too small, the selected tail subset contains insufficient signal. The best performance peaks at approximately 15\%, after which AUROC begins to drop. In contrast, varying weighting coefficient and R{\'e}nyi entropy order induces marginal changes (within 1\%), suggesting the insensitivity to these hyperparameters. With respect to the sampling budget, Uncertainty++ nearly converges with 100 samples. 

\begin{table*}[htbp]
\centering
\caption{Ablation study on the \textit{Reddit} dataset under the black-box setting. w/o low-probability removes the low-probability token selection module, w/o logp removes the log-probability (local uncertainty) component, and w/o entropy removes the R{\'e}nyi-entropy (global uncertainty) component. Bold and underlined numbers denote the best and second-best results, respectively.}
\label{tab:ablation}
\resizebox{\textwidth}{!}{
    \setlength{\tabcolsep}{2pt}
    \footnotesize
    \begin{tabular}{@{}lcccccccccccccr@{}}
    \toprule
    \textbf{Methods/Model} & \textbf{GPT-2} & \textbf{Neo-2.7} & \textbf{OPT-2.7} & \textbf{Llama-13} & \textbf{Llama2-13} & \textbf{Llama3-8} & \textbf{OPT-13} & \textbf{Bloom-7.1} & \textbf{Falcon-7} & \textbf{Gemma-7} & \textbf{Phi2-2.7} & \textbf{GPT-4-T} & \textbf{Avg.} & \textbf{(Diff)} \\ \midrule
    
    \rowcolor{gray!20} \multicolumn{15}{c}{\textbf{Reddit}} \\ \midrule
    
    Full & \textbf{98.39} & \textbf{98.93} & \textbf{96.40} & \textbf{92.59} & \textbf{94.31} & 98.12 & \textbf{95.41} & \textbf{97.46} & \textbf{93.15} & \textbf{96.36} & \textbf{97.18} & \uline{88.91} & \textbf{95.60} & \\
    w/o low-probability & 91.57 & 92.32 & 83.44 & 81.09 & 84.68 & \textbf{99.93} & 82.06 & 90.35 & 82.99 & 91.00 & 91.61 & \textbf{95.28} & 88.86 & -6.74 \\
    w/o logp & 55.76 & 58.04 & 68.28 & 69.09 & 69.70 & 60.46 & 66.90 & 62.28 & 69.36 & 59.58 & 68.35 & 57.78 & 63.80 & -31.80 \\
    w/o entropy & \uline{97.72} & \uline{98.39} & \uline{95.12} & \uline{89.84} & \uline{92.60} & \uline{98.56} & \uline{94.70} & \uline{97.41} & \uline{92.12} & \uline{95.90} & \uline{96.98} & 87.20 & \uline{94.71} & -0.89 \\ \bottomrule
    \end{tabular}
}
\end{table*}

\subsection{Ablation Experiments (RQ2)}
\label{sec:ablation}

Table~\ref{tab:ablation} reports the ablation results on \textit{Reddit} in the black-box setting, which confirm that each module in Uncertainty++ contributes to detection. Removing the low-probability selection causes a clear drop from 95.60 to 88.86 AUROC on average (-6.74), showing that focusing on tail tokens is crucial for avoiding boilerplate dominance and retaining discriminative evidence. Disabling the log-probability component yields the largest degradation, reducing the average AUROC to 63.80 (-31.80), which indicates that the local tail log-probability discrepancy is the primary signal driving separation. In contrast, removing the entropy term only slightly reduces performance to 94.71 (-0.89), suggesting that entropy plays a complementary role by stabilizing decisions when probability-based cues are less reliable.

\subsection{Generalization Results (RQ3)}

\begin{table}[htbp]
\centering
\caption{Results across different domains under the black-box setting. The best result is in bold, and the second best underlined.}
\label{tab:domain}
\footnotesize 
\begin{tabularx}{\columnwidth}{@{}lCCCC@{}} 
\toprule
\textbf{Methods/Domain} & \textbf{FiQA} & \textbf{Wiki} & \textbf{arXiv} & \textbf{Avg.} \\ \midrule
\rowcolor{gray!15} \multicolumn{5}{c}{\textbf{Probability-based Methods}} \\ \midrule
Likelihood & \uline{74.41} & 68.47 & 72.97 & 71.95 \\
DetectLRR & 57.55 & 62.62 & 68.55 & 62.91 \\
Lastde & 62.04 & \uline{76.40} & \uline{88.81} & \uline{75.75} \\
\textbf{Uncertainty} & \textbf{79.67} & \textbf{83.89} & \textbf{93.24} & \textbf{85.60} \\ \midrule
\rowcolor{gray!15} \multicolumn{5}{c}{\textbf{Sampling-based Methods}} \\ \midrule
Fast-DetectGPT & 62.13 & 81.42 & 81.72 & 75.09 \\
Lastde++ & \uline{71.97} & \uline{85.58} & \uline{90.95} & \uline{82.83} \\
\textbf{Uncertainty++} & \textbf{77.36} & \textbf{87.90} & \textbf{94.08} & \textbf{86.45} \\ \bottomrule
\end{tabularx}
\end{table}

To further evaluate the generalization ability, we construct datasets that cover three diverse domains, six response lengths, and three recent LLMs, and compare our methods against strong baselines under the black-box setting.

\textbf{Different domains. }To further evaluate cross-domain generalization, we conduct additional experiments on \textit{FiQA}, \textit{Wiki-csai}, and \textit{arXiv}. We use GPT-5 as the source model, and the prompt is identical to the main experiments.

As shown in Table~\ref{tab:domain}, Uncertainty and Uncertainty++ achieve the best performance on all three domains, improving over the strongest baselines by 9.85 and 3.62 AUROC points on average, respectively. Nevertheless, we observe a clear domain gap: performance on \textit{FiQA} remains below 80\% AUROC for all methods, indicating that detecting AI-generated text in highly specialized financial content is still challenging and may require domain-aware modeling.

\begin{figure}[htbp]
    \centering
    \includegraphics[width=\linewidth]{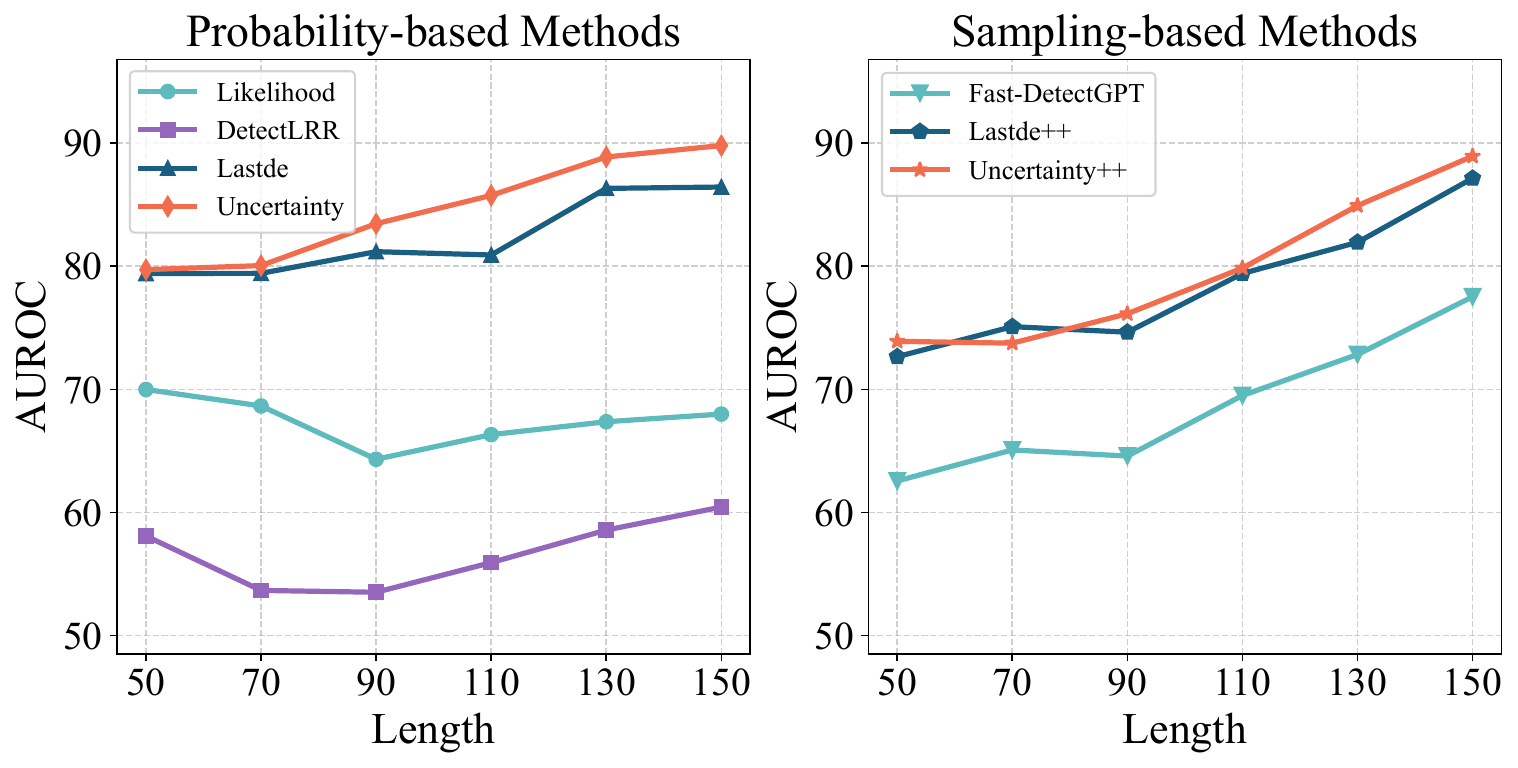}
    \caption{Results across different lengths under black-box setting. The x-axis denotes word number, and the y-axis denotes AUROC.}
    \label{fig:length}
\end{figure}

\textbf{Response lengths. }Detecting shorter texts is generally more challenging due to the limited information~\cite{tian2024multiscale}. To quantify this effect, we conduct experiments on the \textit{arXiv} dataset, where both human-written and AI-generated text are truncated to \{50, 70, 90, 110, 130, 150\} words.

As shown in Figure~\ref{fig:length}, Uncertainty and Uncertainty++ achieve the best average performance within their respective groups, outperforming the strongest baseline by 2.33 and 1.10 AUROC points on average, respectively.

\textbf{Recent LLMs. }To further assess the performance of Uncertainty and Uncertainty++ on recent LLMs, we select GPT-5, Gemini-3-Flash, and GLM-4.5-Flash as source models and conduct experiments on the \textit{arXiv} dataset under black-box setting. The prompt is
identical to the main experiments.

\begin{table}[htbp]
\centering
\caption{Experimental results across recent LLMs under black-box setting. The best result is in bold and the second best underlined.}
\label{tab:model}
\setlength{\tabcolsep}{3pt} 
\footnotesize 
\begin{tabularx}{\columnwidth}{@{}lCCCC@{}}
\toprule
\textbf{Methods/Model} & \textbf{GPT-5} & \textbf{Gemini-3} & \textbf{GLM-4.5} & \textbf{Avg.} \\ \midrule
\rowcolor{gray!15} \multicolumn{5}{c}{\textbf{Probability-based Methods}} \\ \midrule
Likelihood     & 72.97 & 78.13 & 74.72 & 75.27 \\
DetectLRR      & 68.55 & 70.90 & 72.47 & 70.64 \\
Lastde         & \uline{88.81} & \uline{88.29} & \uline{95.34} & \uline{90.81} \\
\textbf{Uncertainty} & \textbf{93.24} & \textbf{90.30} & \textbf{95.41} & \textbf{92.98} \\ \midrule
\rowcolor{gray!15} \multicolumn{5}{c}{\textbf{Sampling-based Methods}} \\ \midrule
Fast-DetectGPT & 81.72 & 91.44 & 92.02 & 88.39 \\
Lastde++       & \uline{90.95} & \uline{92.12} & \uline{95.54} & \uline{92.87} \\
\textbf{Uncertainty++} & \textbf{94.08} & \textbf{92.84} & \textbf{97.46} & \textbf{94.79} \\ \bottomrule
\end{tabularx}
\end{table}

As shown in Table~\ref{tab:model}, Uncertainty and Uncertainty++ achieve the best average performance within their respective groups, outperforming the strongest baseline by 2.17 and 1.92 AUROC points on average, respectively. Notably, our methods maintain consistently strong performance across all evaluated source models (AUROC\textgreater 90\%), indicating robust generalization to text generated by recent LLMs.

\subsection{Robustness Results (RQ4)}
To further assess robustness, we evaluate Uncertainty and Uncertainty++ under a range of decoding variations and LLM-based paraphrasing attacks in the black-box setting.

\begin{figure}[t]
  \centering
  \includegraphics[width=\columnwidth]{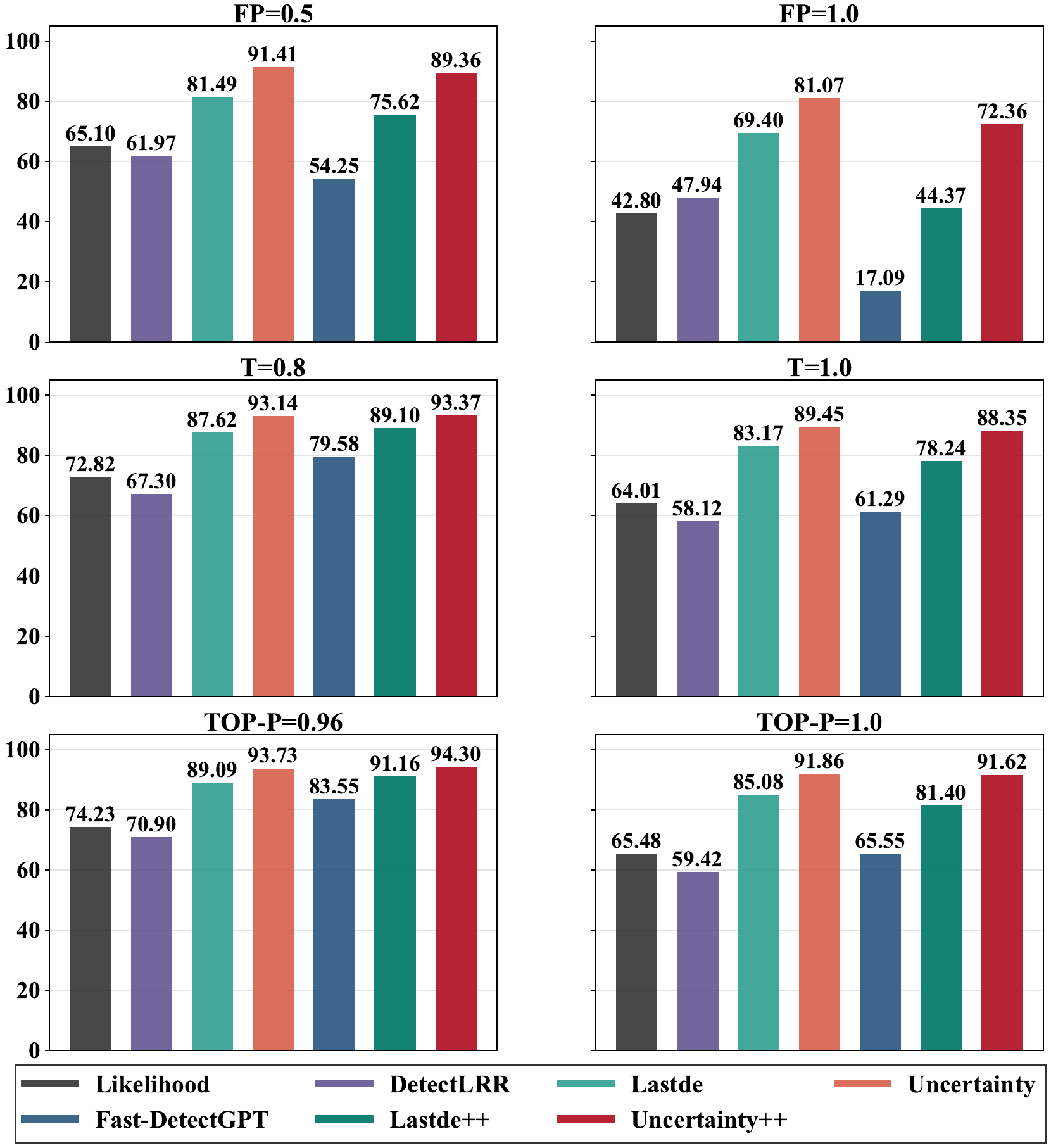}
  
  \caption{Results under different decoding strategies in the black-box setting on the \textit{arXiv} dataset, with GPT-5 as the source model.}
  \label{fig:decoding}
\end{figure}

\textbf{Decoding strategies. }Following Lastde, we examine robustness under different decoding strategies. We adopt GPT-5 as the source model and use \textit{arXiv} as the evaluation dataset. In addition to varying temperature T and top-$p$, we include frequency penalty (FP), which encourages diverse generations by down-weighting tokens that have already appeared.

As shown in Figure~\ref{fig:decoding}, Uncertainty and Uncertainty++ achieve the best average performance within their respective groups, outperforming Lastde and Lastde++ by 7.47 and 11.58 AUROC points on average, respectively. Notably, the baselines exhibit noticeable degradation when FP increases, whereas Uncertainty and Uncertainty++ remain relatively stable across decoding settings. These results suggest that leveraging distributional-shape information, rather than relying solely on token-level probabilities (e.g., Likelihood) or sampling-based signals (e.g., Fast-DetectGPT), improves robustness to decoding-induced distribution shifts.

\begin{table}[htbp]
\centering

\caption{Experimental results across different paraphrasing models under the black-box setting. Original denotes detection results on the raw AI-generated texts. GPT-5 and Phi-2 denote detection results on texts paraphrased by the corresponding LLMs. Avg. is the average performance over paraphrased texts. The best result is in bold, and the second best result is underlined.}

\label{tab:paraphrase}
\setlength{\tabcolsep}{2.5pt} 
\footnotesize 
\begin{tabularx}{\columnwidth}{@{}lCCCC@{}}
\toprule
\textbf{Methods/Strategies} & \textbf{\mbox{Original}} & \textbf{\mbox{GPT-5}} & \textbf{Phi-2} & \textbf{Avg.} \\ \midrule
\rowcolor{gray!15} \multicolumn{5}{c}{\textbf{Probability-based Methods}} \\ \midrule
Likelihood      & 72.97 & 62.28 & 48.59 & 55.44 \\
DetectLRR       & 68.55 & 49.06 & 52.19 & 50.63 \\
Lastde          & \uline{88.81} & \uline{77.15} & \textbf{85.76} & \uline{81.46} \\
\textbf{Uncertainty} & \textbf{93.24} & \textbf{81.97} & \uline{82.80} & \textbf{82.39} \\ \midrule
\rowcolor{gray!15} \multicolumn{5}{c}{\textbf{Sampling-based Methods}} \\ \midrule
Fast-DetectGPT  & 81.72 & 64.49 & 72.13 & 68.31 \\
Lastde++        & \uline{90.95} & \uline{75.21} & \uline{82.75} & \uline{78.98} \\
\textbf{Uncertainty++} & \textbf{94.08} & \textbf{77.57} & \textbf{85.73} & \textbf{81.65} \\ \bottomrule
\end{tabularx}
\end{table}

\textbf{Paraphrasing attack. }Prior studies~\cite{Hu2023radar} have shown that paraphrasing can substantially reduce the accuracy of AI-generated text detectors. To broaden the evaluation, we adopt GPT-5 and Phi-2 as paraphrasing models. All experiments are conducted on the \textit{arXiv} dataset, where the original AI-generated texts are produced by GPT-5. The paraphrasing prompt template is provided in Appendix~\ref{sec:appendix_dataset}.

As shown in Table~\ref{tab:paraphrase}, our methods achieve the best average performance under paraphrasing attacks. Specifically, Uncertainty attains an average AUROC of 82.39 points, outperforming Lastde by 0.93 points, while Uncertainty++ reaches 81.65 points, exceeding Lastde++ by 2.67 points. However, under stronger paraphrasers like GPT-5, the AUROC of all methods drops by more than 10 points relative to Original, indicating robustness to high-quality paraphrasing remains challenging and will be a key focus of future work.

\subsection{Low-Probability Results (RQ5)}

To examine whether our low-probability proposition generalizes beyond Uncertainty and Uncertainty++, we apply the same low-probability percentile to three representative detectors: Likelihood, LogRank, and Fast-DetectGPT. For each detector, we keep the method unchanged, except that the corresponding statistics are computed only over the low-probability token positions instead of the full sequence. All experiments follow the black-box setting with Neo-2.7 as the source model on \textit{XSum}, \textit{WritingPrompts}, and \textit{Reddit}.

As shown in Figure~\ref{fig:low_prob}, focusing on low-probability tokens consistently improves all three detectors. Overall, the average AUROC increases by 24.53, 18.97, and 4.20 points for Likelihood, LogRank, and Fast-DetectGPT, respectively, lifting all methods above 90\% AUROC on average. In particular, Likelihood benefits the most from restricting attention to the tail, suggesting that their original statistics are diluted by high-probability boilerplate tokens, while concentrating on low-probability regions yields a more stable separation. 

We further illustrate the difference in score distributions between human-written and AI-generated text when computed over all tokens versus only low-probability tokens in Fig.~\ref{fig:low_prob_2}. Experiments are conducted under the black-box setting on the \textit{Reddit} dataset. Restricting computation to low-probability tokens yields a markedly clearer separation between human and AI text than aggregating over all tokens, indicating substantially stronger discriminative power.


\begin{figure}[t]
    \centering
    \includegraphics[width=\columnwidth]{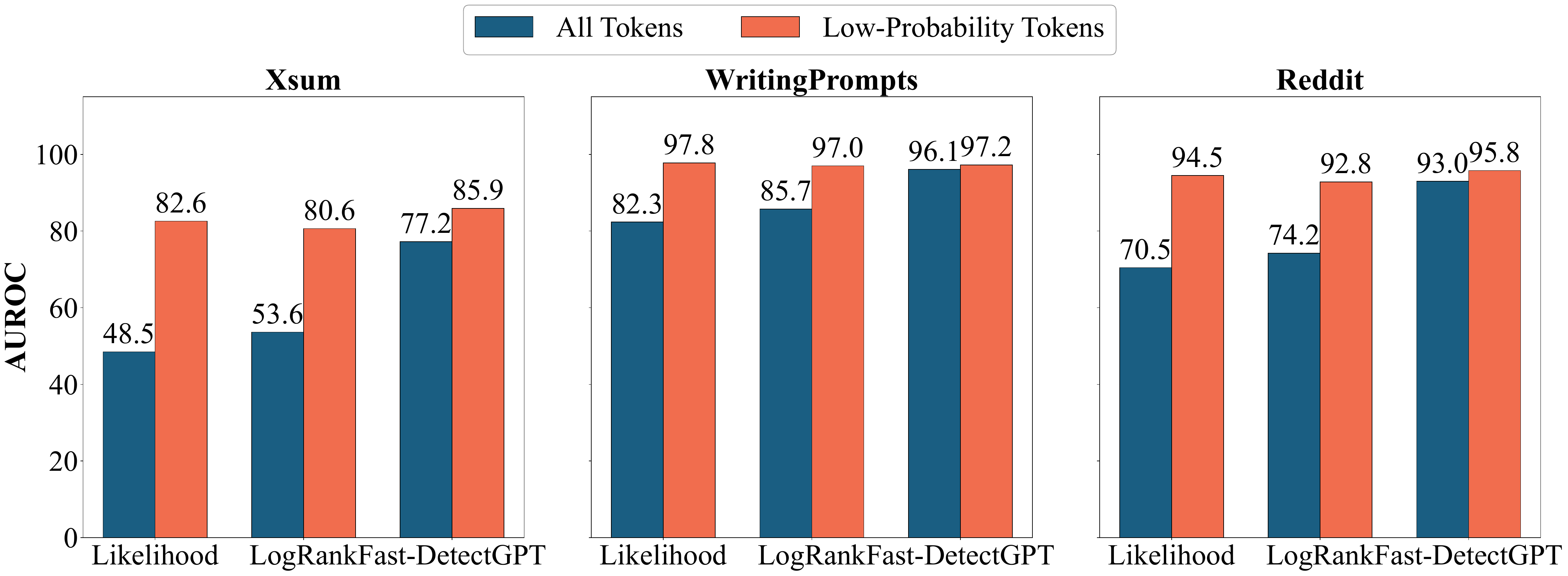}
    \caption{AUROC comparison between using all tokens and low-probability tokens under the black-box setting. Results are reported on \textit{XSum}, \textit{WritingPrompts}, and \textit{Reddit}, with Neo-2.7 as the source model and GPT-J as the proxy model. The low-probability percentile is set to match the setting of main experiments. }
    \label{fig:low_prob}
\end{figure}

\begin{figure}[t]
  \centering

  \begin{subfigure}[t]{0.49\columnwidth}
    \centering
    \includegraphics[width=\linewidth]{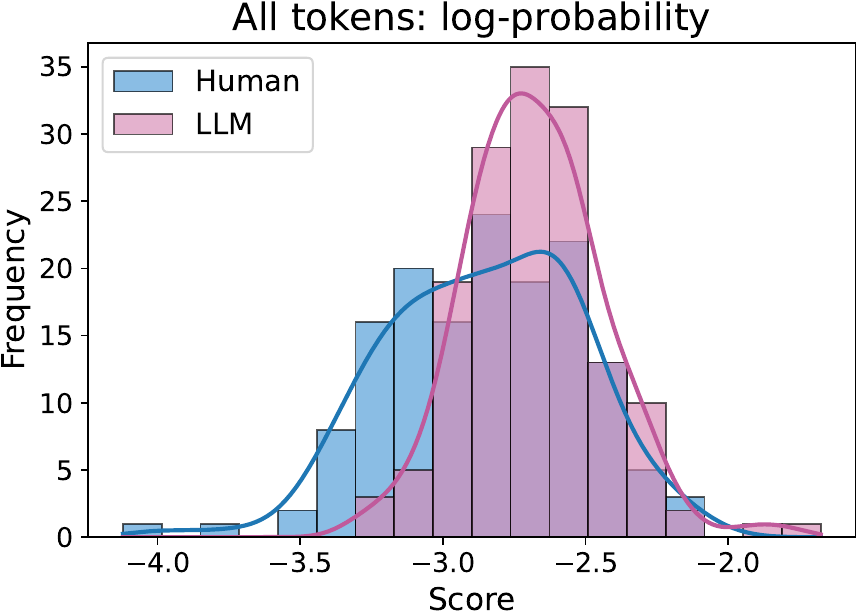}
    \label{fig:all_tokens}
  \end{subfigure}
  \hfill
  \begin{subfigure}[t]{0.49\columnwidth}
    \centering
    \includegraphics[width=\linewidth]{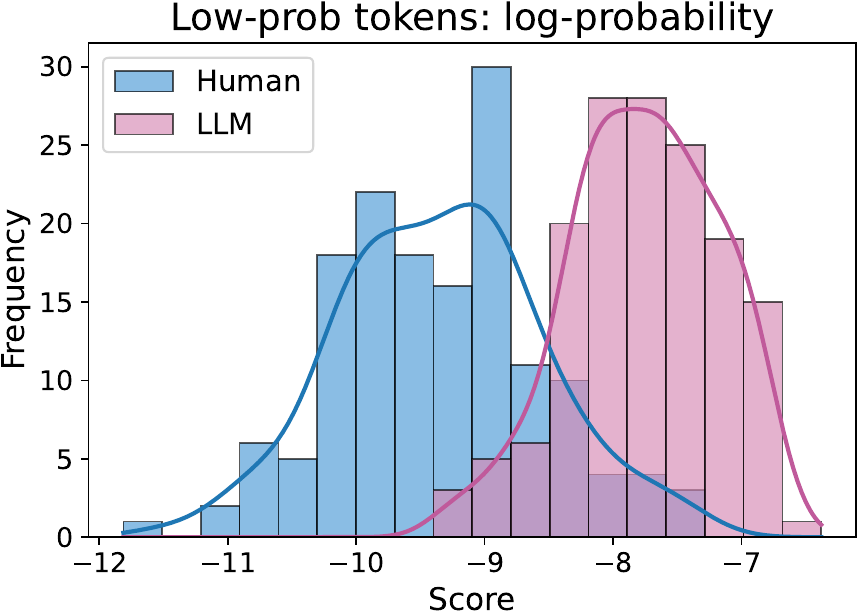}
    \label{fig:low_prob_tokens}
  \end{subfigure}

  \caption{Score distributions for human-written and AI-generated text on the \textit{Reddit} dataset using the Likelihood method under (left) the all-token setting and (right) the low-probability-token setting, with GPT-2 as source model. For clarity, we plot log-probabilities.}
  \label{fig:low_prob_2}
\end{figure}

\section{Conclusion}
This work presents Uncertainty and Uncertainty++, multiscale detectors for zero-shot AI-generated text detection that emphasize low-probability tokens and capture distribution-shape uncertainty via R{\'e}nyi entropy. However, performance can degrade on very short or highly domain-specific texts. In future work, we will further explore two directions: (i) further interpolate uncertainty to mitigate sparse statistical evidence with better calibration; (ii) for domain-specific texts, strengthen domain adaptation of the proxy model to better estimate token probability distributions in specialized domains and enhance uncertainty-based detection. 

\section*{Impact Statement}
This work aims to provide a more reliable statistical signal for identifying AI-generated text, which may help mitigate misinformation, academic misuse, and corpus contamination. At the same time, responsible deployment should explicitly account for false positives and false negatives, report uncertainty when evidence is weak, and avoid treating detector outputs as the sole evidence in high-stakes decisions. In practice, we recommend using our detector as a decision-support tool alongside human review and complementary forensic signals, with thresholds calibrated to the application’s risk tolerance and domain characteristics.

\bibliography{example_paper}
\bibliographystyle{icml2026}

\newpage
\appendix
\onecolumn

\section{Detailed Empirical and Theoretical Analyses}
\label{sec:appendix}

This appendix reports additional theoretical and empirical analyses for the proposed local and global uncertainty measures. 
These analyses justify the design choices and stability properties of the proposed unified uncertainty detection framework.

\subsection{Empirical Validation of Assumption~\ref{assump:low_prob}}
\label{sec:appendix_empirical}

We provide a detailed empirical evidence supporting Assumption~\ref{assump:low_prob}.
We partition tokens into low-probability ($i \in \mathcal{S}_\rho$) and high-probability ($i \notin \mathcal{S}_\rho$) groups with $\rho = 0.15$, and compare human-written and AI-generated text on \textit{XSum}.

\begin{table}[h]
\centering
\caption{Statistical comparison of human-written and AI-generated text on low-probability versus high-probability tokens on \textit{XSum} (source model: LlaMa3-8B, proxy model: GPT-J).
The column $\Delta$ reports the difference in mean LogRank between human and AI text (Human $-$ AI), where larger values indicate greater discriminability.
The column Ratio reports the ratio of mean probability (AI / Human), where values further from 1 indicate greater discriminability.
AI-generated text exhibits lower uncertainty with the gap most pronounced for low-probability tokens.}
\label{tab:low_prob_stats}
\scriptsize
\begin{tabular}{lcccccc}
\toprule
Token Type
& \multicolumn{2}{c}{Mean LogRank}
& $\Delta$
& \multicolumn{2}{c}{Mean Probability}
& Ratio \\
\cmidrule(lr){2-3} \cmidrule(lr){5-6}
& Human & AI & (H $-$ AI) & Human & AI & (AI / H) \\
\midrule
Low-prob.\ ($i \in \mathcal{S}_\rho$, $\rho = 0.15$)
& $4.63$ & $3.04$ & $\mathbf{1.59}$
& $0.0016$ & $0.0155$ & $\mathbf{9.69\times}$ \\
High-prob.\ ($i \notin \mathcal{S}_\rho$)
& $0.71$ & $0.26$ & $0.45$
& $0.380$ & $0.524$ & $1.38\times$ \\
\bottomrule
\end{tabular}
\end{table}

The results reveal a striking pattern.
For low-probability tokens, AI-generated text has substantially lower LogRank than human-written text, with a difference $\Delta = 1.59$.
The probability ratio is even more pronounced: AI-generated tokens are on average $9.69$ times more probable than their human-written counterparts.
In contrast, for high-probability tokens, the LogRank difference shrinks to $\Delta = 0.45$ and the probability ratio drops to $1.38$ times.
This confirms that the discriminative signal is concentrated in low-probability regions, motivating a selective aggregation strategy that focuses on these positions.

\subsection{Proof of Proposition~\ref{prop:concavity}}
\label{sec:appendix_concavity_proof}

\begin{proof}
The operator can be written as $\mathcal{Q}_\rho(\{Y_i\}) = \min_{|S|=\rho n} \frac{1}{|S|}\sum_{i \in S} Y_i$.
Each function $\frac{1}{|S|}\sum_{i \in S} Y_i$ is linear in $\{Y_i\}$, and the pointwise minimum of linear functions is concave.
\end{proof}

\subsection{Empirical Validation of Proposition~\ref{prop:asymmetric_gap}}
\label{sec:appendix_jensen_gap}

We validate Proposition~\ref{prop:asymmetric_gap} on \textit{XSum}, \textit{WritingPrompts}, and \textit{Reddit}, with 12 source models and 5,400 paired samples.

\begin{table}[h]
\centering
\caption{Decomposition of discrepancy statistic. Superscript $*$ denotes normalized variants.}
\scriptsize
\label{tab:percentile_motivation}
\begin{tabular}{lcccccc}
\toprule
\textbf{Source} & $D_\rho$ & $\tilde{D}_\rho$ & $D_1$ & $D_\rho^*$ & $\tilde{D}_\rho^*$ & $D_1^*$ \\
\midrule
\textbf{Human} & 0.04 & 0.04 & $-$0.05 & 0.08 & 0.09 & $-$0.31 \\
\textbf{AI}    & 1.20 & 0.64 & 0.22   & \textbf{2.18} & 1.34 & 1.43 \\
\bottomrule
\end{tabular}
\end{table}

Table~\ref{tab:percentile_motivation} reports the decomposition of discrepancy statistics into the original discrepancy $D_\rho$, its Jensen counterpart $\widetilde{D}_\rho$, and the full-sequence discrepancy $D_1$, together with their normalized variants.

For human-written text, all statistics remain close to zero:
\[
D_\rho^* \approx \widetilde{D}_\rho^* \approx D_1^* \approx 0,
\]
indicating that percentile selection does not induce a noticeable Jensen gap. 
For AI-generated text, a clear separation emerges.
While $\widetilde{D}_\rho^*$ and $D_1^*$ are comparable, $D_\rho^*$ is substantially larger:
\[
D_\rho^* > \widetilde{D}_\rho^* \approx D_1^* > 0.
\]
This demonstrates that combining percentile selection with Jensen amplification yields a stronger discrepancy signal than either component alone.

\newpage

\subsection{Proof of Proposition~\ref{prop:entropy_robust}}
\label{sec:entropy_proof}

\begin{proof}
Let $\pi = p(x_i) = p_\theta(x_i \mid x_{<i}) \leq \tau$ denote the observed token probability.
The multiplicative perturbation with proportional redistribution produces a perturbed distribution $p'$ with
\begin{equation*}
p'(x_i) = \frac{\pi}{\gamma},
\qquad p'(v) = c\,p(v) \text{ for } v \neq x_i,
\qquad c = \frac{1 - \pi/\gamma}{1 - \pi},
\end{equation*}
where $c$ is the scaling factor that ensures the perturbed distribution remains normalized, i.e., $\sum_v p'(v) = 1$.
Since $\gamma \geq 1$, $c$ satisfies $1 \leq c \leq 1/(1-\tau)$.

\textbf{Log-probability.}
Directly, $\bigl|\log p'(x_i) - \log p(x_i)\bigr| = \log\gamma$.

\textbf{Entropy contribution.}
Since $H_\alpha = \frac{1}{1-\alpha}\log S_\alpha$, it suffices to bound $\bigl|\log\bigl(S_\alpha(p')/S_\alpha\bigr)\bigr|$.
Separating the perturbed token from the redistributed mass gives
\begin{equation*}
S_\alpha(p') = (\pi/\gamma)^\alpha + c^\alpha\bigl(S_\alpha - \pi^\alpha\bigr),
\end{equation*}
where $S_\alpha \geq \pi^\alpha$ because $\pi^\alpha$ is one term of the sum.
A direct calculation gives
\begin{equation*}
c - 1 = \frac{\pi(1-\gamma^{-1})}{1-\pi} \geq 0,
\qquad
c - \gamma^{-1} = \frac{\gamma-1}{\gamma(1-\pi)} \geq 0,
\end{equation*}
so $c \geq 1$ and $c \geq 1/\gamma$.
We bound the ratio $S_\alpha(p')/S_\alpha$ from above and below in turn.

\emph{Upper bound.}
Since $\alpha > 0$, the inequality $c \geq 1/\gamma$ yields $(\pi/\gamma)^\alpha \leq c^\alpha \pi^\alpha$, hence
\begin{equation*}
S_\alpha(p') \leq c^\alpha \pi^\alpha + c^\alpha(S_\alpha - \pi^\alpha) = c^\alpha S_\alpha.
\end{equation*}
Applying $\log(1+x) \leq x$ at $x = c-1$, together with monotonicity of $t \mapsto t/(1-t)$ on $[0,1)$ and the assumption $\pi \leq \tau$,
\begin{equation*}
\log\frac{S_\alpha(p')}{S_\alpha}
\leq \alpha\log c
\leq \frac{\alpha\,\pi(1-\gamma^{-1})}{1-\pi}
\leq \frac{\alpha\,\tau(1-\gamma^{-1})}{1-\tau}.
\end{equation*}

\emph{Lower bound.}
Using $c^\alpha \geq 1$ in the decomposition,
\begin{equation*}
S_\alpha(p') \geq (\pi/\gamma)^\alpha + (S_\alpha - \pi^\alpha)
= S_\alpha - \pi^\alpha\bigl(1 - \gamma^{-\alpha}\bigr),
\end{equation*}
and using $\pi^\alpha \leq \tau^\alpha$ (since $\pi \leq \tau$ and $\alpha > 0$),
\begin{equation*}
\frac{S_\alpha(p')}{S_\alpha} \;\geq\; 1 - \frac{\tau^\alpha(1-\gamma^{-\alpha})}{S_\alpha}.
\end{equation*}
The regularity condition $\tau \leq (S_\alpha/2)^{1/\alpha}$ ensures $\tau^\alpha(1-\gamma^{-\alpha})/S_\alpha \in [0, 1/2]$, on which $-\log(1-x) \leq 2x$, whence
\begin{equation*}
-\log\frac{S_\alpha(p')}{S_\alpha} \;\leq\; \frac{2\,\tau^\alpha(1-\gamma^{-\alpha})}{S_\alpha}.
\end{equation*}

\emph{Conclusion.}
Combining the two one-sided bounds,
\begin{equation*}
\left|\log\frac{S_\alpha(p')}{S_\alpha}\right|
\leq \max\!\left(\frac{\alpha\,\tau\,(1-\gamma^{-1})}{1-\tau},\; \frac{2\,\tau^\alpha(1-\gamma^{-\alpha})}{S_\alpha}\right),
\end{equation*}
and dividing by $|1-\alpha|$ yields the claimed bound on $|H_\alpha(p') - H_\alpha(p)|$.
\end{proof}

\newpage

\section{Related Work}
\label{sec:related_work}
We review prior work on AI-generated text detection, covering watermark methods, fine-tuning methods, and statistical methods. For statistical approaches, we further categorize them into probability-based methods and sampling-based methods.

\subsection{Watermark Methods}
\label{sec:appendix_a1_watermark}

Watermarking methods embed a detectable signal at generation time, enabling downstream verification and provenance tracking. For example, Qu et al.~\cite{qu-etal-2025-provably} propose a multi-bit watermark that remains decodable under bounded text edits by distributing watermark evidence across groups of positions and aggregating it for decoding, while Gloaguen et al.~\cite{gloaguen25blackbox} study black-box watermark presence detection: using only a limited number of black-box queries to an API/model, they develop statistical tests to detect whether popular watermarking scheme families are deployed.

\paragraph{Limitations.}
Testing several popular APIs, Gloaguen et al.~\cite{gloaguen25blackbox} find no strong evidence of watermarking. This suggests that, in certain model families, watermark presence may not be detectable.

\subsection{Fine-tuning Methods}

Fine-tuning methods train discriminative models on labeled corpora of human-written and AI-generated text, typically producing a probability that an input is AI-generated~\cite{guo23hc3}). Recent fine-tuning methods further improve robustness by enhancing the training procedure, DP-Net~\cite{zhou25kill} learns dynamic perturbations to create harder training examples and reduce sensitivity to attacks. Learning2Rewrite~\cite{hao-etal-2025-learning} fine-tunes a rewriting model to amplify separability between human and LLM text based on how much rewriting is needed, yielding stable decision.

\paragraph{Limitations.}
Despite strong in-distribution performance, fine-tuned detectors depend on curated labeled data and can still degrade under unseen generators and domains, which limits their suitability for zero-shot detection~\cite{chen25divscore}.

\subsection{Probability-based Methods}
\label{sec:probability}
From a statistical perspective, detectors either (i) use pointwise token scores derived from $p_\theta(\cdot \mid x_{<i})$ (probability-based), or (ii) use sample comparisons to capture relative score beyond the observed token (sampling-based). We first review the probability-based family. These methods capture only \emph{local} uncertainty about the specific token choice.

\paragraph{Likelihood.}
The Likelihood statistic \cite{gehrmann19gltr} computes the average log-probability of the observed tokens:
\begin{equation}
z_{\text{likelihood}}(\mathbf{x})=\frac{1}{n-1}\sum_{i=1}^{n-1}\log p_\theta(x_i\mid x_{<i}).
\end{equation}

\paragraph{LogRank.}
LogRank~\cite{gehrmann19gltr} adopts the rank of observed tokens under the conditional distribution:
\begin{equation}
z_{\text{logrank}}(\mathbf{x})=\frac{1}{n-1}\sum_{i=1}^{n-1}\log r_\theta(x_i\mid x_{<i}),
\end{equation}
where $r_\theta(x_i\mid x_{<i}) \geq 1$ is the rank of $x_i$, determined by ordering all possible tokens at position $i$ in descending order by their probabilities.

\paragraph{DetectLRR.}
LRR \cite{su23detectllm} combines both signals:
\begin{equation}
z_{\text{LRR}}(\mathbf{x})= -\frac{\sum_{i=1}^{n-1}\log p_\theta(x_i\mid x_{<i})}{\sum_{i=1}^{n-1}\log r_\theta(x_i\mid x_{<i})}.
\end{equation}

\paragraph{Lastde.}
Lastde \cite{xu25lastde} introduces a time-series perspective by treating the token log-probability sequence as a temporal signal and extracting local dynamics via diversity entropy. It applies multiscale transformation with scales $\tau = 1, 2, \ldots, \tau'$, where scale-$\tau$ averages consecutive log-probabilities:
\begin{equation}
\log p_\theta^{(\tau)}(j) = \frac{1}{\tau}\sum_{i=j}^{j+\tau-1} \log p_\theta(x_i|x_{<i}).
\end{equation}
For each scale, a sliding window of size $s$ segments the sequence, and cosine similarities between adjacent segments are computed and discretized into $\varepsilon$ bins to form a probability state sequence $P^{(\tau)} = (P_1^{(\tau)}, \ldots, P_\varepsilon^{(\tau)})$. The diversity entropy at scale $\tau$ is as follow:
\begin{equation}
\mathrm{DE}(s, \varepsilon, \tau) = -\frac{1}{\ln \varepsilon}\sum_{i=1}^{\varepsilon} P_i^{(\tau)} \ln P_i^{(\tau)}.
\end{equation}
The final detection score combines log-likelihood with aggregated multiscale diversity entropy:
\begin{equation}
z_{\text{Lastde}}(\mathbf{x}) = \frac{\frac{1}{n-1}\sum_{i=1}^{n-1} \log p_\theta(x_i|x_{<i})}{\mathrm{Agg}\bigl(\mathrm{DE}(s, \varepsilon, 1), \ldots, \mathrm{DE}(s, \varepsilon, \tau')\bigr)},
\end{equation}
where $\mathrm{Agg}(\cdot)$ is an aggregation function (e.g., standard deviation).

\paragraph{Limitations.}
These methods discard the shape of the conditional distribution, retaining only information about the observed token. Although Lastde captures local temporal dynamics of the probability sequence, it still aggregates \emph{uniformly} over all positions, allowing high-probability boilerplate tokens to dominate and dilute the discriminative signal from informative low-probability positions.

\subsection{Sampling-based Methods}
\label{sec:sampling}
These methods address the limitation of raw point estimates by computing \emph{relative} statistics, comparing the observed text's score to those of sampled or perturbed variants.

\paragraph{DetectGPT.}
DetectGPT \cite{mitchell23detectgpt} uses a perturbation model $q_\zeta$ to generate perturbed samples $\tilde{\mathbf{x}} \sim q_\zeta(\cdot|\mathbf{x})$, then computes:
\begin{equation}
z_{\text{DetectGPT}}(\mathbf{x})=\frac{\log p_\theta(\mathbf{x})-\tilde{\mu}}{\tilde{\sigma}},
\end{equation}
where $\tilde{\mu} = \mathbb{E}\bigl[\log p_\theta(\tilde{\mathbf{x}})\bigr]$ and $\tilde{\sigma}^2 = \mathrm{Var}\bigl[\log p_\theta(\tilde{\mathbf{x}})\bigr]$ are the expectation and variance over the perturbed samples. In the experiments, following the setup described in the referenced paper, we used T5-3B as the perturbation model.

\paragraph{DetectNPR.}
Normalized Log-Rank Perturbation (NPR) \cite{su23detectllm} applies the relative comparison idea to ranks. Given perturbed samples $\tilde{\mathbf{x}} \sim q_\zeta(\cdot|\mathbf{x})$ generated by a perturbation model:
\begin{equation}
z_{\text{NPR}}(\mathbf{x})=\frac{\mathbb{E}\bigl[\log r_\theta(\tilde{\mathbf{x}})\bigr]}{\log r_\theta(\mathbf{x})},
\end{equation}
where $\log r_\theta(\mathbf{x}) = \frac{1}{n-1}\sum_{i=1}^{n-1}\log r_\theta(x_i|x_{<i})$ is the averaged log-rank of the original text, and the expectation is over the perturbed samples.

\paragraph{DNA-GPT.}
DNA-GPT \cite{yang24dnagpt} truncates the original text $\mathbf{x}$ to obtain a context $\mathbf{z}$, then samples continuations $\tilde{\mathbf{x}} \sim p_\theta(\cdot|\mathbf{z})$ from the scoring model. The detection score compares the original likelihood with the expected likelihood of sampled continuations:
\begin{equation}
z_{\text{DNA}}(\mathbf{x}) = \log p_\theta(\mathbf{x}) - \mathbb{E}\bigl[\log p_\theta(\tilde{\mathbf{x}})\bigr].
\end{equation}

\paragraph{Fast-DetectGPT.}
Fast-DetectGPT \cite{bao2024fastdetectgpt} replaces expensive perturbation with efficient conditional sampling. At each position $i$, it independently samples $\tilde{x}_i \sim p_\theta(\cdot|x_{<i})$ conditioned on the original prefix, then computes:
\begin{equation}
z_{\text{Fast}}(\mathbf{x})=\frac{\sum_{i=1}^{n-1}\log p_\theta(x_i\mid x_{<i})-\tilde{\mu}}{\tilde{\sigma}},
\end{equation}
where $\tilde{\mu} = \mathbb{E}\bigl[\sum_{i=1}^{n-1}\log p_\theta(\tilde{x}_i\mid x_{<i})\bigr]$ and $\tilde{\sigma}^2 = \mathrm{Var}\bigl[\sum_{i=1}^{n-1}\log p_\theta(\tilde{x}_i\mid x_{<i})\bigr]$ are computed over the conditionally sampled tokens. In the experiments, following the setup described in the referenced paper, we set the number of samples to 10,000.

\paragraph{Lastde++.}
Lastde++ \cite{xu25lastde} extends Lastde by incorporating sampling-based normalization. At each position $i$, it independently samples $\tilde{x}_i \sim p_\theta(\cdot|x_{<i})$ conditioned on the original prefix to form $\tilde{\mathbf{x}}$, then computes:
\begin{equation}
z_{\text{Lastde++}}(\mathbf{x}) = \frac{z_{\text{Lastde}}(\mathbf{x}) - \tilde{\mu}}{\tilde{\sigma}},
\end{equation}
where $\tilde{\mu} = \mathbb{E}\bigl[z_{\text{Lastde}}(\tilde{\mathbf{x}})\bigr]$ and $\tilde{\sigma}^2 = \mathrm{Var}\bigl[z_{\text{Lastde}}(\tilde{\mathbf{x}})\bigr]$ are computed over the conditionally sampled texts.

\paragraph{Limitations.}
While relative normalization improves robustness, these methods still rely on point estimates and aggregate uniformly over all positions, remaining diluted by high-probability boilerplate.

\subsection{Relation to Our Work}

Table~\ref{tab:method_comparison} summarizes the key differences between existing statistical methods and our approach along two dimensions.

\begin{table}[h]
\centering
\caption{Comparison of detection methods along two key dimensions.}
\label{tab:method_comparison}
\begin{tabular}{lcc}
\toprule
\textbf{Method} & \textbf{Uncertainty Type} & \textbf{Aggregation} \\
\midrule
Likelihood, LogRank, LRR & Local (point) & Uniform \\
Lastde & Local (point) + temporal & Uniform \\
DetectGPT, Fast-DetectGPT & Local (point, relative) & Uniform \\
NPR, DNA-GPT & Local (point, relative) & Uniform \\
Lastde++ & Local (point + temporal, relative) & Uniform \\
\midrule
\textbf{Ours} & \textbf{Local + Global} & \textbf{Selective (low-prob.)} \\
\bottomrule
\end{tabular}
\end{table}

Prior methods suffer from one or both of the following limitations:
\begin{itemize}
    \item \textbf{Uniform aggregation}: They treat all token positions equally, allowing high-probability boilerplate to dominate and dilute the discriminative signal.
    \item \textbf{Single-scale uncertainty}: They model uncertainty at a single level (typically token-level local uncertainty), without jointly capturing global distributional uncertainty.
\end{itemize}

Our method addresses both limitations:
\begin{itemize}
    \item \textbf{Selective aggregation}: We focus on low-probability token positions, where the gap between human-written and AI-generated text is most pronounced, avoiding dilution by uninformative boilerplate.
    \item \textbf{Multiscale uncertainty}: We combine local uncertainty (tail log-probability of observed tokens) with global uncertainty (R{\'e}nyi entropy of the conditional distribution), capturing complementary views of the model's uncertainty.
\end{itemize}

This combination of selective aggregation and multiscale uncertainty modeling distinguishes our approach from prior works.

\subsection{Low-probability Signals in Membership Inference}

Membership inference aims to determine whether a given sample is included in a model's training data. Although it differs from AI-generated text detection, both tasks can exploit distributional traces reflected in probability patterns.
Recent studies in membership inference also emphasize low-probability signals. Shi et al.~\cite{shi2024detecting} propose Min-K\% Prob, which detects pre-training data by aggregating the lowest-probability $K\%$ tokens rather than all tokens. Zhang et al.~\cite{zhang2025mink} improve this idea with Min-K\%++, which calibrates low-probability evidence using the conditional distribution over the vocabulary. Raoof et al.~\cite{raoof2025infilling} further extend this line by introducing infilling score, which identifies unlikely tokens using both left and right contexts.

These works support the broader observation that informative signals often concentrate in low-probability regions. Future work may reveal deeper connections between membership inference and AI-generated text detection.

\newpage

\section{Details of Experimental Setup}
\label{sec:experimental setup}
\subsection{Datasets and Prompts}
\label{sec:appendix_dataset}

Following Lastde, we conduct black-box detection on \textit{XSum}, \textit{WritingPrompts}, and \textit{Reddit}, and white-box detection on \textit{XSum}, \textit{SQuAD}, and \textit{WritingPrompts}. We directly adopt the datasets used in Lastde’s main experiments. Each dataset contains 150 paired examples of human-written and AI-generated text, where each AI-generated sample is obtained by continuing from the first 30 tokens of its corresponding human-written text.

For the generalization experiments, we follow the same protocol as Lastde: we randomly sample 150 human-written texts from each of \textit{FiQA}, \textit{Wiki-csai}, and \textit{arXiv}. For the paraphrasing attack, we rewrite the AI-generated texts produced on \textit{arXiv}.

\begin{figure}[t]
  \centering
  \includegraphics[width=0.73\textwidth]{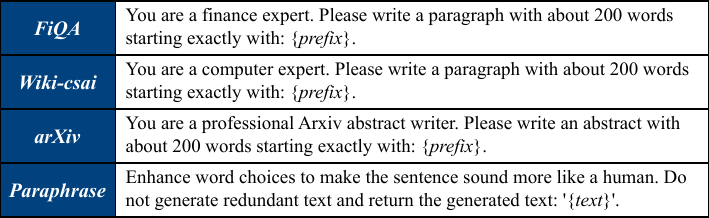}
  \caption{Prompt templates used in our experiments.}
  \label{fig:prompt}
\end{figure}

Figure~\ref{fig:prompt} summarizes the prompts used throughout our experiments. The first three prompts are used to generate model continuations for the domain generalization study on \textit{FiQA}, \textit{Wiki-csai}, and \textit{arXiv}, respectively, where the continuation is required to start exactly with 30 tokens of human-written text. The last prompt is used in the paraphrasing attacks, which rewrites the input AI-generated text to be more human-like.

\subsection{Models}
\label{sec:appendix_model}
Following Lastde, all models used in our experiments are listed in Table~\ref{tab:used_models}. For the black-box setting, all surrogate models are based on GPT-J unless specified.

\begin{table}[t]
\centering
\caption{Models used in our experiments.}
\label{tab:used_models}
\setlength{\tabcolsep}{6pt}
\small
\begin{tabular}{l l r}
\toprule
\textbf{Model} & \textbf{Model File/Service} & \textbf{Parameters} \\
\midrule
GPT-2~\cite{radford2019gpt2}         & openai-community/gpt2-xl       & 1.5B \\
OPT-2.7~\cite{zhang2022opt}          & facebook/opt-2.7b              & 2.7B \\
Neo-2.7~\cite{black2021gptneo}       & EleutherAI/gpt-neo-2.7B        & 2.7B \\
Phi2-2.7~\cite{gunasekar2023phi2}    & microsoft/phi-2                & 2.7B \\
GPT-J~\cite{gpt-j-6b}            & EleutherAI/gpt-j-6B            & 6B \\
Falcon-7~\cite{penedo2023refinedweb} & tiiuae/falcon-7b               & 7B \\
Gemma-7~\cite{gemma2024gemma}        & google/gemma-7b                & 7B \\
BLOOM-7.1~\cite{bigscience2022bloom} & bigscience/bloom-7b1           & 7.1B \\
Llama3-8~\cite{meta2024llama3card}   & meta-llama/Meta-Llama-3-8B     & 8B \\
OPT-13~\cite{zhang2022opt}           & facebook/opt-13b               & 13B \\
Llama-13~\cite{touvron2023llama}     & huggyllama/llama-13b           & 13B \\
Llama2-13~\cite{touvron2023llama2}   & TheBloke/Llama-2-13B-fp16      & 13B \\
\midrule
GPT-4-Turbo        & OpenAI                         & NA \\
GPT-5        & OpenAI                         & NA \\
Gemini-3-Flash       & Google                         & NA \\
GLM-4.5-Flash & ZhipuAI                     & NA \\
\bottomrule
\end{tabular}
\end{table}

\subsection{Hyperparameter Settings}
\label{sec:hyperparameter_setting}
\begin{table}[t]
\centering
\begin{tabular}{lcccc}
\toprule
 & \multicolumn{2}{c}{Uncertainty} & \multicolumn{2}{c}{Uncertainty++} \\
\cmidrule(lr){2-3}\cmidrule(lr){4-5}
 & Black-box & White-box & Black-box & White-box \\
\midrule
Low-probability Percentile & 7   & 7   & 13  & 9 \\
Renyi Entropy Order        & 2.0 & 0.5 & 1.6 & 0.7 \\
Weighting Coefficient      & 0.8 & 0.9 & 0.1 & 0.2 \\
Sample Size                & 100 & 100 & 100 & 100 \\
\bottomrule
\end{tabular}
\caption{Hyperparameter settings for Uncertainty and Uncertainty++.}
\label{tab:hyperparameter_setting}
\end{table}

Following Lastde, we select an optimal set of hyperparameters for each model under both black-box and white-box settings, as shown in Table~\ref{tab:hyperparameter_setting}. However, our hyperparameter analysis in Figure~\ref{fig:hyperparameter} shows that the performance is not sensitive to variations in the R{\'e}nyi entropy order and the weighting coefficient, with fluctuations of less than 1\%.

\section{Details of Main Results}
\subsection{Detail of Main Results in the Black-Box Setting}
\label{sec:black_main}
\begin{table*}[htbp]
\centering
\caption{Experimental results of various detection methods under black-box setting. This table reports detailed results on \textit{Xsum}, \textit{WritingPrompts}, and \textit{Reddit}, corresponding to the main results in Table~\ref{tab:main_black}. All experimental settings follow Appendix~\ref{sec:experimental setup}.}

\label{tab:appendix_black}
\begin{adjustbox}{width=\textwidth}
    \setlength{\tabcolsep}{3pt}
    \footnotesize
    \begin{tabular}{@{}lccccccccccccc@{}}
    \toprule
    \textbf{Method/Models} & \textbf{GPT-2} & \textbf{Neo-2.7} & \textbf{OPT-2.7} & \textbf{Llama-13} & \textbf{Llama2-13} & \textbf{Llama3-8} & \textbf{OPT-13} & \textbf{Bloom-7.1} & \textbf{Falcon-7} & \textbf{Gemma-7} & \textbf{Phi2-2.7} & \textbf{GPT-4-T} & \textbf{Avg.} \\ \midrule
    \rowcolor{gray!20} \multicolumn{14}{c}{\textbf{Xsum}} \\ \midrule
    Likelihood & 54.00 & 48.48 & 63.21 & 54.21 & 54.32 & 98.91 & 68.84 & 39.50 & 54.01 & 65.41 & 54.60 & 60.44 & 59.66 \\
    LogRank & 58.36 & 53.63 & 66.94 & 60.04 & 59.45 & 99.11 & 72.36 & 47.21 & 59.91 & 68.16 & 61.72 & 61.52 & 64.03 \\
    DetectLRR & 67.40 & 66.77 & 71.85 & 71.69 & 68.10 & 96.29 & 75.06 & 65.95 & 71.33 & 70.37 & 74.51 & 61.75 & 71.76 \\
    Lastde & 80.26 & 82.45 & 83.48 & \textbf{71.70} & 67.40 & \textbf{99.37} & 85.89 & \textbf{77.30} & 75.29 & 75.24 & 80.16 & 68.56 & 78.93 \\
    \textbf{Uncertainty} & \textbf{84.88} & \textbf{85.91} & \textbf{88.10} & 70.41 & \textbf{70.38} & 96.90 & \textbf{88.18} & 75.02 & \textbf{77.64} & \textbf{79.03} & \textbf{82.89} & \textbf{68.75} & \textbf{80.67} \\ \hdashline
    DetectGPT & 58.86 & 55.97 & 63.61 & 56.28 & 54.98 & 75.66 & 65.44 & 46.87 & 55.83 & 62.28 & 54.22 & 67.35 & 59.78 \\
    DetectNPR & 55.64 & 50.72 & 63.58 & 56.50 & 55.65 & 96.36 & 66.72 & 44.54 & 56.33 & 60.37 & 55.15 & 62.94 & 60.38 \\
    DNA-GPT & 53.17 & 44.47 & 55.40 & 52.92 & 54.63 & 99.04 & 59.26 & 48.67 & 54.63 & 55.03 & 54.28 & 57.32 & 57.40 \\
    Fast-DetectGPT & 80.81 & 77.25 & 82.10 & 64.83 & 62.79 & \textbf{99.20} & 84.25 & 69.45 & 72.10 & 76.67 & 75.71 & 80.80 & 77.16 \\
    Lastde++ & 88.84 & 89.71 & 90.95 & 73.31 & 73.23 & 98.77 & 91.99 & 82.34 & 82.81 & 83.22 & 85.75 & \textbf{83.12} & 85.34 \\
    \textbf{Uncertainty++} & \textbf{93.39} & \textbf{93.76} & \textbf{94.57} & \textbf{79.63} & \textbf{78.46} & 96.70 & \textbf{93.47} & \textbf{87.52} & \textbf{87.88} & \textbf{85.97} & \textbf{90.50} & 78.76 & \textbf{88.38} \\ \midrule
    \rowcolor{gray!20} \multicolumn{14}{c}{\textbf{WritingPrompts}} \\ \midrule
    Likelihood & 78.53 & 82.34 & 77.08 & 76.44 & 78.98 & 99.79 & 77.43 & 76.75 & 78.60 & 66.21 & 86.18 & 81.49 & 79.99 \\
    LogRank & 81.52 & 85.73 & 82.03 & 79.80 & 81.80 & \textbf{99.92} & 81.39 & 81.38 & 81.47 & 68.11 & 88.73 & 79.02 & 82.58 \\
    DetectLRR & 85.35 & 90.54 & 90.55 & 85.27 & 85.73 & 96.64 & 87.95 & 90.00 & 86.35 & 67.42 & 90.64 & 66.65 & 85.26 \\
    Lastde & 97.20 & 96.54 & \textbf{96.73} & 87.99 & 87.81 & 99.55 & \textbf{94.21} & \textbf{96.72} & 93.45 & 74.21 & 94.04 & 69.05 & 90.63 \\
    \textbf{Uncertainty} & \textbf{97.66} & \textbf{98.48} & 96.36 & \textbf{90.31} & \textbf{90.86} & 98.92 & 93.41 & 96.64 & \textbf{93.89} & \textbf{78.50} & \textbf{96.23} & \textbf{82.04} & \textbf{92.78} \\ \hdashline
    DetectGPT & 77.05 & 78.47 & 83.30 & 74.46 & 77.79 & 90.14 & 84.65 & 70.29 & 80.28 & 62.14 & 84.86 & \textbf{94.21} & 79.80 \\
    DetectNPR & 79.36 & 81.29 & 83.48 & 77.03 & 81.13 & 96.48 & 86.16 & 73.04 & 81.41 & 58.75 & 85.64 & 90.78 & 81.21 \\
    DNA-GPT & 74.27 & 75.82 & 73.55 & 68.35 & 75.52 & \textbf{99.56} & 74.20 & 71.53 & 71.74 & 58.84 & 82.55 & 72.82 & 74.90 \\
    Fast-DetectGPT & 94.93 & 96.07 & 92.02 & 88.25 & 86.43 & 99.20 & 89.64 & 92.77 & 90.05 & 76.51 & 93.65 & 89.88 & 90.78 \\
    Lastde++ & 98.12 & 98.80 & 97.08 & 94.43 & 92.44 & 98.65 & 95.11 & 97.45 & 96.04 & 84.45 & 97.03 & 88.41 & 94.83 \\
    \textbf{Uncertainty++} & \textbf{98.75} & \textbf{99.52} & \textbf{98.91} & \textbf{95.75} & \textbf{94.34} & 98.22 & \textbf{97.36} & \textbf{98.70} & \textbf{97.43} & \textbf{85.44} & \textbf{97.36} & 87.11 & \textbf{95.74} \\ \midrule
    \rowcolor{gray!20} \multicolumn{14}{c}{\textbf{Reddit}} \\ \midrule
    Likelihood & 65.53 & 70.45 & 61.91 & 66.56 & 72.52 & \textbf{100.00} & 60.15 & 69.17 & 69.66 & 78.08 & 81.00 & 97.16 & 74.35 \\
    LogRank & 70.75 & 74.23 & 68.05 & 70.97 & 76.75 & \textbf{100.00} & 65.30 & 73.81 & 73.68 & 80.36 & 83.60 & \textbf{97.19} & 77.89 \\
    DetectLRR & 81.92 & 80.31 & 81.24 & 78.36 & 82.56 & 96.92 & 77.67 & 82.52 & 81.99 & 82.94 & 86.30 & 93.19 & 83.83 \\
    Lastde & 91.89 & 91.32 & 89.90 & 81.61 & 85.45 & 99.89 & 88.99 & 93.79 & 85.03 & 89.19 & 91.51 & 90.32 & 89.91 \\
    \textbf{Uncertainty} & \textbf{95.13} & \textbf{96.33} & \textbf{91.22} & \textbf{85.47} & \textbf{90.03} & 99.46 & \textbf{90.04} & \textbf{95.09} & \textbf{88.32} & \textbf{92.57} & \textbf{94.90} & 94.55 & \textbf{92.76} \\ \hdashline
    DetectGPT & 66.77 & 73.41 & 69.18 & 67.63 & 71.11 & 82.91 & 71.59 & 68.32 & 69.97 & 75.22 & 79.19 & 83.63 & 71.59 \\
    DetectNPR & 69.20 & 73.23 & 72.13 & 69.97 & 75.03 & 97.40 & 72.51 & 71.41 & 73.51 & 78.03 & 81.44 & 86.10 & 76.66 \\
    DNA-GPT & 65.01 & 67.59 & 61.96 & 61.04 & 69.99 & \textbf{99.82} & 63.78 & 65.84 & 68.88 & 73.90 & 79.24 & 82.12 & 71.60 \\
    Fast-DetectGPT & 92.78 & 92.97 & 85.44 & 79.68 & 83.58 & 99.53 & 84.54 & 91.43 & 82.08 & 91.20 & 90.69 & \textbf{93.88} & 88.98 \\
    Lastde++ & 97.15 & 97.91 & 93.87 & 88.05 & 91.29 & 99.38 & 93.37 & 96.91 & 89.95 & 95.11 & 95.85 & 93.30 & 94.35 \\
    \textbf{Uncertainty++} & \textbf{98.39} & \textbf{98.93} & \textbf{96.40} & \textbf{92.59} & \textbf{94.31} & 98.12 & \textbf{95.41} & \textbf{97.46} & \textbf{93.15} & \textbf{96.36} & \textbf{97.18} & 88.91 & \textbf{95.60} \\ \bottomrule
    \end{tabular}
\end{adjustbox}
\end{table*}

Table~\ref{tab:appendix_black} shows that our approach delivers the most consistent performance in the black-box setting. In particular, Uncertainty++ achieves the best average scores on all three datasets, with 88.38 on \textit{Xsum}, 95.74 on \textit{WritingPrompts}, and 95.60 on \textit{Reddit}. It surpasses the strongest baseline, Lastde++ (85.34/94.83/94.35), by 3.04, 0.91, and 1.25 points, respectively, and exceeds Fast-DetectGPT by an even larger margin on average. One important observation is that \textit{Xsum} is substantially more challenging than \textit{WritingPrompts} and \textit{Reddit} for probability-based methods, yet uncertainty-based signals remain effective.

\subsection{Detail of Main Results in the White-Box Setting}
\label{sec:white_main}
\begin{table*}[htbp]
\centering
\caption{Experimental results across \textit{Xsum}, \textit{SQuAD}, and \textit{WritingPrompts} under the white-box setting, corresponding to the main results in Table~\ref{tab:main_white}. All experimental settings follow Appendix~\ref{sec:experimental setup}.}

\label{tab:appendix_white}
\resizebox{0.97\textwidth}{!}{
    \setlength{\tabcolsep}{2pt}
    \footnotesize
    \begin{tabular}{@{}lccccccccccccc@{}}
    \toprule
    \textbf{Method/Models} & \textbf{GPT-2} & \textbf{Neo-2.7} & \textbf{OPT-2.7} & \textbf{GPT-J} & \textbf{Llama-13} & \textbf{Llama2-13} & \textbf{Llama3-8} & \textbf{OPT-13} & \textbf{Bloom-7.1} & \textbf{Falcon-7} & \textbf{Gemma-7} & \textbf{Phi2-2.7} & \textbf{Avg.} \\ \midrule
    
    \rowcolor{gray!20} \multicolumn{14}{c}{\textbf{Xsum}} \\ \midrule
    Likelihood & 86.89 & 86.68 & 81.46 & 81.44 & 61.66 & 63.06 & 99.43 & 77.15 & 85.33 & 68.58 & 67.94 & 88.68 & 79.03 \\
    LogRank & 90.00 & 90.59 & 84.64 & 84.98 & 67.06 & 68.78 & 99.86 & 81.04 & 90.60 & 73.62 & 73.08 & 91.29 & 82.96 \\
    DetectLRR & 92.49 & 92.44 & 85.57 & 86.66 & 79.12 & 77.05 & 98.99 & 84.46 & 95.05 & 79.97 & 81.60 & 91.73 & 87.09 \\
    Lastde & 95.88 & 97.45 & 95.65 & 96.48 & 88.11 & 87.07 & \textbf{100.00} & 93.00 & \textbf{99.32} & 91.33 & 91.47 & 95.81 & 94.30 \\
    \textbf{Uncertainty} & \textbf{97.32} & \textbf{98.23} & \textbf{97.17} & \textbf{98.03} & \textbf{93.97} & \textbf{92.65} & 99.17 & \textbf{95.79} & 98.46 & \textbf{96.12} & \textbf{95.95} & \textbf{97.28} & \textbf{96.68} \\ \hdashline
    DetectGPT & 89.52 & 89.41 & 83.33 & 81.24 & 68.81 & 67.55 & 69.27 & 76.95 & 86.99 & 75.70 & 73.08 & 88.97 & 78.82 \\
    DetectNPR & 90.98 & 92.31 & 86.46 & 86.15 & 69.49 & 70.94 & 98.52 & 80.01 & 93.67 & 77.09 & 73.85 & 92.64 & 84.34 \\
    DNA-GPT & 83.78 & 80.30 & 77.08 & 75.35 & 59.25 & 58.83 & 98.40 & 71.32 & 80.82 & 60.15 & 59.36 & 83.64 & 74.02 \\
    Fast-DetectGPT & 98.91 & 98.96 & 96.98 & 98.12 & 94.77 & 92.65 & \textbf{99.90} & 95.46 & 99.21 & 96.45 & 96.37 & 98.08 & 97.16 \\
    Lastde++ & \textbf{99.34} & 99.76 & 98.48 & 99.03 & 97.83 & 96.31 & 99.77 & 97.67 & \textbf{99.88} & 98.54 & 97.99 & \textbf{98.95} & 98.63 \\
    \textbf{Uncertainty++} & 99.18 & \textbf{99.78} & \textbf{99.13} & \textbf{99.56} & \textbf{98.09} & \textbf{97.22} & 99.69 & \textbf{98.21} & 99.59 & \textbf{99.44} & \textbf{98.84} & 98.28 & \textbf{98.92} \\ \midrule

    \rowcolor{gray!20} \multicolumn{14}{c}{\textbf{SQuAD}} \\ \midrule
    Likelihood & 92.05 & 86.77 & 88.84 & 80.51 & 46.55 & 48.39 & 95.65 & 84.04 & 85.09 & 75.60 & 72.76 & 83.21 & 78.29 \\
    LogRank & 95.14 & 91.48 & 92.39 & 85.84 & 51.87 & 53.59 & 97.22 & 87.98 & 90.56 & 81.19 & 76.95 & 87.35 & 82.63 \\
    DetectLRR & 97.97 & 96.92 & 95.83 & 92.56 & 71.70 & 73.51 & 98.96 & 92.12 & 95.36 & 89.28 & 80.74 & 92.64 & 89.80 \\
    Lastde & 99.62 & 98.82 & 99.00 & 96.20 & 80.24 & 82.40 & \textbf{99.04} & 98.06 & \textbf{99.57} & 96.33 & 91.00 & 96.82 & 94.76 \\
    \textbf{Uncertainty} & \textbf{99.74} & \textbf{98.89} & \textbf{99.24} & \textbf{98.65} & \textbf{82.44} & \textbf{86.20} & 97.34 & \textbf{98.31} & 99.46 & \textbf{97.66} & \textbf{96.36} & \textbf{98.43} & \textbf{96.06} \\ \hdashline
    DetectGPT & 93.73 & 86.26 & 90.15 & 77.83 & 40.96 & 47.16 & 54.71 & 81.76 & 86.81 & 70.52 & 71.46 & 81.66 & 73.58 \\
    DetectNPR & 97.48 & 94.13 & 94.68 & 84.40 & 48.30 & 50.32 & 90.66 & 91.15 & 93.60 & 80.99 & 76.74 & 88.09 & 82.55 \\
    DNA-GPT & 93.74 & 88.78 & 89.96 & 83.33 & 50.65 & 54.83 & 96.56 & 85.43 & 88.22 & 79.94 & 67.68 & 86.44 & 80.46 \\
    Fast-DetectGPT & 99.91 & 99.72 & 99.69 & 99.51 & 87.28 & 89.40 & \textbf{99.91} & 99.65 & 99.75 & 98.83 & 96.78 & 99.28 & 97.48 \\
    Lastde++ & \textbf{99.95} & \textbf{99.93} & 99.94 & 99.72 & 92.77 & 94.19 & 99.85 & \textbf{99.96} & \textbf{99.97} & 99.72 & 98.59 & \textbf{99.30} & 98.66 \\
    \textbf{Uncertainty++} & \textbf{99.95} & 99.66 & \textbf{99.96} & \textbf{99.80} & \textbf{94.27} & \textbf{97.12} & 99.41 & 99.89 & 99.72 & \textbf{99.85} & \textbf{98.77} & 99.18 & \textbf{98.97} \\ \midrule

    \rowcolor{gray!20} \multicolumn{14}{c}{\textbf{WritingPrompts}} \\ \midrule
    Likelihood & 96.03 & 94.76 & 93.93 & 92.94 & 82.77 & 84.64 & 99.94 & 92.17 & 93.61 & 86.12 & 69.56 & 97.13 & 90.30 \\
    LogRank & 97.31 & 96.55 & 95.94 & 95.24 & 87.68 & 88.51 & 99.98 & 94.28 & 96.07 & 89.03 & 74.47 & 97.73 & 92.73 \\
    DetectLRR & 98.02 & 98.85 & 97.99 & 97.52 & 93.54 & 92.37 & 97.22 & 96.43 & 98.58 & 92.93 & 81.68 & 97.95 & 95.26 \\
    Lastde & 99.78 & 99.59 & 99.80 & 98.62 & 97.16 & 96.40 & \textbf{100.00} & 98.84 & \textbf{99.78} & 97.95 & 92.12 & \textbf{98.24} & 98.19 \\
    \textbf{Uncertainty} & \textbf{99.90} & \textbf{99.76} & \textbf{99.67} & \textbf{99.43} & \textbf{98.78} & \textbf{98.81} & 99.57 & \textbf{99.75} & 99.54 & \textbf{98.95} & \textbf{95.79} & 98.13 & \textbf{99.01} \\ \hdashline
    DetectGPT & 97.04 & 95.52 & 97.61 & 92.40 & 81.56 & 81.45 & 86.42 & 96.45 & 94.04 & 87.73 & 67.35 & 98.02 & 89.63 \\
    DetectNPR & 98.85 & 97.88 & 98.57 & 96.04 & 88.00 & 88.23 & 97.48 & 98.19 & 97.59 & 91.11 & 73.62 & \textbf{98.45} & 93.67 \\
    DNA-GPT & 92.25 & 91.32 & 93.34 & 87.95 & 76.95 & 79.72 & 99.26 & 90.77 & 91.19 & 82.04 & 63.85 & 93.91 & 86.88 \\
    Fast-DetectGPT & 99.80 & 99.78 & 99.68 & 99.22 & 98.30 & 98.08 & \textbf{99.88} & 99.53 & 99.80 & 98.50 & 97.50 & 96.94 & 98.92 \\
    Lastde++ & 99.88 & \textbf{99.96} & \textbf{99.95} & 99.77 & 99.51 & 99.39 & 99.84 & 99.73 & \textbf{99.97} & 99.29 & 98.56 & 98.00 & \textbf{99.49} \\
    \textbf{Uncertainty++} & \textbf{99.95} & 99.94 & 99.93 & \textbf{99.87} & \textbf{99.88} & \textbf{99.68} & 99.41 & \textbf{99.88} & 99.95 & \textbf{99.83} & \textbf{99.11} & 94.32 & 99.31 \\ \bottomrule
    \end{tabular}
}
\end{table*}

In the white-box setting (Table~\ref{tab:appendix_white}), most methods operate close to a performance ceiling due to the access to richer internal signals, yet our approach remains among the top performers. Uncertainty++ achieves the highest average on \textit{Xsum} (98.92) and \textit{SQuAD} (98.97), exceeding Lastde++ (98.63 and 98.66) by 0.29 and 0.31 points, respectively, and ranks a close second on \textit{WritingPrompts} with 99.31 compared to 99.49.

\subsection{Performance under Low False Positive Rate}
\label{sec:fpr}

\begin{table}[t]
\centering
\caption{Detection performance on \textit{Xsum}, \textit{SQuAD}, and \textit{WritingPrompts} under the white-box setting. The source models are identical to the main experiments. The best result is in bold, and the second best result is underlined.}
\label{tab:fpr}
\begin{adjustbox}{width=0.8\textwidth}
\begin{tabular}{lcccccc}
\toprule
 & \multicolumn{2}{c}{Xsum} & \multicolumn{2}{c}{SQuAD} & \multicolumn{2}{c}{WritingPrompts} \\
\cmidrule(lr){2-3}\cmidrule(lr){4-5}\cmidrule(lr){6-7}
 & TPR@1\%FPR & TPR@5\%FPR & TPR@1\%FPR & TPR@5\%FPR & TPR@1\%FPR & TPR@5\%FPR \\
\midrule
Fast-DetectGPT & 69.00 & 85.17 & 76.17 & 87.61 & 79.44 & 94.72 \\
Lastde++ & \uline{80.83} & \uline{93.33} & \textbf{84.78} & \uline{92.00} & \uline{91.28} & \uline{97.33} \\
\textbf{Uncertainty++} & \textbf{83.50} & \textbf{94.67} & \uline{83.89} & \textbf{95.22} & \textbf{94.11} & \textbf{99.06} \\
\bottomrule
\end{tabular}
\end{adjustbox}
\end{table}

Following prior work~\cite{masrour-etal-2025-damage}, we report the TPR@1\%FPR and TPR@5\%FPR of Fast-detectgpt, Lastde++, and Uncertainty++ under white-box setting, as a metric of model performance under a low false-positive-rate constraint. 

As shown in Table~\ref{tab:fpr}, Uncertainty++ is overall the most robust method across the three datasets under both low false-positive-rate constraints. On \textit{Xsum} and \textit{WritingPrompts}, Uncertainty++ achieves the highest TPR at both FPR levels. Overall, when the detector is required to keep the false positive rate at 1\% or 5\%, our approach still preserves strong true positive rates, indicating a lower risk of false alarms and more reliable detection under low-FPR operating conditions.

\section{Efficiency Analysis}
\label{sec:efficiency}

\begin{table}[t]
\centering
\caption{Efficiency comparison on the \textit{XSum} dataset under the black-box setting with GPT-J-6B as the proxy model.}
\label{tab:efficiency}
\begin{adjustbox}{width=0.45\columnwidth}
\small
\begin{tabular}{lcc}
\toprule
\multicolumn{3}{c}{\textbf{Probability-based Methods}} \\
\midrule
\textbf{Method} & \textbf{Efficiency (sample/s)} & \textbf{VRAM (GB)} \\
\midrule
DetectLRR & 25.00 & 12.68 \\
Lastde & 25.00 & 12.68 \\
\textbf{Uncertainty} & \textbf{27.27} & 12.68 \\
\midrule
\multicolumn{3}{c}{\textbf{Sampling-based Methods}} \\
\midrule
\textbf{Method} & \textbf{Efficiency (sample/s)} & \textbf{VRAM (GB)} \\
\midrule
Fast-DetectGPT & \textbf{23.08} & 12.68 \\
Lastde++ & 12.50 & 12.68 \\
\textbf{Uncertainty++} & 18.75 & 12.68 \\
\bottomrule
\end{tabular}
\end{adjustbox}
\end{table}

To evaluate computational efficiency, we compare representative detectors on the \textit{XSum} dataset under the black-box setting. All experiments are conducted on a single NVIDIA A100 GPU with 80GB VRAM, and all methods use GPT-J-6B as the proxy model. We report throughput, measured as processed samples per second, and peak GPU memory usage.

As shown in Table~\ref{tab:efficiency}, Uncertainty achieves the highest throughput among all methods, processing 27.27 samples per second, while using the same 12.68 GB VRAM. For sampling-based methods, Uncertainty++ reaches 18.75 samples per second. Overall, the proposed methods achieve competitive efficiency with controllable computational overhead.

\section{Performance under Different Proxy Models}
\label{sec:proxy_model}

\begin{table*}[htbp]
\centering
\caption{Experimental results across different proxy models on the \textit{Reddit} dataset under the black-box setting.}
\label{tab:proxy_model}
\resizebox{0.9\textwidth}{!}{
    \setlength{\tabcolsep}{3pt} 
    \footnotesize
    \begin{tabular}{@{}lccccccccccccc@{}}
    \toprule
    \textbf{Methods/Models} & \textbf{GPT-2} & \textbf{Neo-2.7} & \textbf{OPT-2.7} & \textbf{GPT-J} & \textbf{Llama-13} & \textbf{Llama2-13} & \textbf{Llama3-8} & \textbf{OPT-13} & \textbf{Bloom-7.1} & \textbf{Falcon-7} & \textbf{Phi2-2.7} & \textbf{GPT-4-T} & \textbf{Avg.} \\ \midrule
    
    \rowcolor{gray!15} \multicolumn{14}{c}{\textbf{Gemma-7b}} \\ \midrule
    Fast-DetectGPT & 76.68 & 77.11 & 86.66 & 82.47 & 86.91 & 88.27 & \textbf{98.94} & 87.14 & 83.69 & 86.62 & 87.23 & 48.48 & 82.52 \\
    Lastde++       & \uline{87.84} & \uline{90.48} & \uline{93.55} & \uline{90.92} & \uline{92.88} & \uline{93.24} & \uline{98.24} & \uline{94.72} & \uline{92.65} & \uline{92.83} & \uline{91.96} & \uline{46.62} & \uline{88.83} \\
    Uncertainty++  & \textbf{91.98} & \textbf{93.82} & \textbf{95.37} & \textbf{93.72} & \textbf{93.41} & \textbf{94.83} & 98.11 & \textbf{95.44} & \textbf{93.72} & \textbf{94.85} & \textbf{92.76} & \textbf{62.58} & \textbf{91.72} \\ \midrule

    \textbf{Methods/Models} & \textbf{GPT-2} & \textbf{Neo-2.7} & \textbf{OPT-2.7} & \textbf{GPT-J} & \textbf{Llama-13} & \textbf{Llama2-13} & \textbf{OPT-13} & \textbf{Bloom-7.1} & \textbf{Falcon-7} & \textbf{Gemma-7} & \textbf{Phi2-2.7} & \textbf{GPT-4-T} & \textbf{Avg.} \\ \midrule
    \rowcolor{gray!15} \multicolumn{14}{c}{\textbf{Llama3-8b}} \\ \midrule
    Fast-DetectGPT & 66.62 & 67.78 & 79.79 & 76.70 & 85.81 & 87.66 & 81.39 & 82.30 & 85.50 & 86.93 & 86.46 & 60.25 & 78.93 \\
    Lastde++       & \uline{81.36} & \uline{84.27} & \uline{90.21} & \uline{90.01} & \uline{92.29} & \uline{94.27} & \uline{91.53} & \uline{93.13} & \uline{93.00} & \uline{92.66} & \textbf{93.04} & \uline{60.65} & \uline{88.04} \\
    Uncertainty++  & \textbf{89.10} & \textbf{90.12} & \textbf{92.32} & \textbf{92.87} & \textbf{93.83} & \textbf{95.10} & \textbf{93.54} & \textbf{94.02} & \textbf{94.23} & \textbf{93.19} & \uline{92.13} & \textbf{65.60} & \textbf{90.50} \\ \midrule

    \textbf{Methods/Models} & \textbf{GPT-2} & \textbf{Neo-2.7} & \textbf{OPT-2.7} & \textbf{Llama-13} & \textbf{Llama2-13} & \textbf{Llama3-8} & \textbf{OPT-13} & \textbf{Bloom-7.1} & \textbf{Falcon-7} & \textbf{Gemma-7} & \textbf{Phi2-2.7} & \textbf{GPT-4-T} & \textbf{Avg.} \\ \midrule
    \rowcolor{gray!15} \multicolumn{14}{c}{\textbf{GPT-J-6b}} \\ \midrule
    Fast-DetectGPT & 92.78 & 92.97 & 85.44 & 79.68 & 83.58 & \textbf{99.53} & 84.54 & 91.43 & 82.08 & 91.20 & 90.69 & \textbf{93.88} & 88.98 \\
    Lastde++       & \uline{97.15} & \uline{97.91} & \uline{93.87} & \uline{88.05} & \uline{91.29} & \uline{99.38} & \uline{93.37} & \uline{96.91} & \uline{89.95} & \uline{95.11} & \uline{95.85} & \uline{93.30} & \uline{94.35} \\
    Uncertainty++  & \textbf{98.39} & \textbf{98.93} & \textbf{96.40} & \textbf{92.59} & \textbf{94.31} & 98.12 & \textbf{95.41} & \textbf{97.46} & \textbf{93.15} & \textbf{96.33} & \textbf{97.18} & 88.91 & \textbf{95.60} \\ \bottomrule
    \end{tabular}
}
\end{table*}

Table~\ref{tab:proxy_model} reports performance under the black-box setting when varying the proxy model. We follow the same black-box protocol as the main experiments and keep all evaluation settings fixed, while only replacing GPT-J-6b with Gemma-7b and Llama3-8b. Overall, Uncertainty++ achieves the best average performance under all three proxy choices. Meanwhile, we observe a clear proxy effect that GPT-J-6b yields the strongest results across methods, suggesting that a stronger proxy model provides more reliable conditional probability estimates and therefore raises the performance ceiling.


\end{document}